\def\BibTeX{{\rm B\kern-.05em{\sc i\kern-.025em b}\kern-.08em T\kern-.1667em\lower.7ex\hbox{E}\kern-.125emX}}
\DeclareMathOperator*{\argmin}{arg\,min}
\newtheorem{definition}{Definition}
\newtheorem{theorem}{Theorem}
\newtheorem{proposition}{Proposition}
\DeclarePairedDelimiter\floor{\lfloor}{\rfloor}
\begin{document}

\title{Adversarial Robustness of Similarity-Based\\ Link Prediction
}

\author{\IEEEauthorblockN{Kai Zhou}
\IEEEauthorblockA{\textit{Dept. of Computer Science and Engineering} \\
\textit{Washington University in St. Louis}\\
St. Louis, MO, USA \\
zhoukai@wustl.edu}
\and
\IEEEauthorblockN{Tomasz P. Michalak}
\IEEEauthorblockA{\textit{Institute of Informatics} \\
\textit{University of Warsaw}\\
Warsaw, Poland \\
tpm@mimuw.edu.pl}
\and
\IEEEauthorblockN{Yevgeniy Vorobeychik}
\IEEEauthorblockA{\textit{Dept. of Computer Science and Engineering} \\
	\textit{Washington University in St. Louis}\\
	St. Louis, MO, USA \\
yvorobeychik@wustl.edu}

}

\maketitle

\begin{abstract}
Link prediction is one of the fundamental problems in social network
analysis. A common set of techniques for link prediction rely on similarity
metrics which use the topology of the observed subnetwork to quantify
the likelihood of unobserved links.
Recently, similarity metrics for link prediction have been shown to be
vulnerable to attacks whereby observations about the network are
adversarially modified to hide target links.
We propose a novel approach for increasing robustness of
similarity-based link prediction by endowing the analyst with a restricted set of reliable queries which accurately measure the existence
of queried links.
The analyst aims to robustly predict a collection of possible
links by optimally allocating the reliable queries.
We formalize the analyst's problem as a Bayesian Stackelberg game in
which they first choose the reliable queries, followed by an adversary
who deletes a subset of links among the remaining (unreliable) queries
by the analyst.
The analyst in our model is uncertain about the particular target link the
adversary attempts to hide, whereas the adversary has full information
about the analyst and the network.
Focusing on similarity metrics using only local information, we show
that the problem is NP-Hard for both players, and devise two
principled and efficient approaches for solving it approximately.
Extensive experiments with real and synthetic networks demonstrate the
effectiveness of our approach.
\end{abstract}

\begin{IEEEkeywords}
	Social network analysis, link prediction, adversarial robustness, game theory
\end{IEEEkeywords}


\section{Introduction}

The availability of massive social network datasets has led to the widespread use of Social Network Analysis (SNA) tools. For instance, centrality measures are used to identify important individuals~\cite{freeman1978centrality}, while link prediction aims to uncover hidden or missing connections within the network~\cite{liben2007link}. At the high level, such SNA tools extract knowledge from the observed network data, and  the reliability of SNA critically relies on the veracity of these observed data.

However, network data collection (which subsequently grounds SNA) is not necessarily reliable.
Many modes of data collection are error-prone, including field surveys (which may suffer from imperfect participant recall) and digitally collected data (such as social media, in which ``friends'' may both exclude actual friends, and include people who have never met one another).
In addition to such non-adversarial noise in data collection, many SNA settings introduce incentives for individuals to deliberately subvert network analysis by tampering with the data collection process. For example, suppose that law enforcement is investigating a crime network and collecting information about this network from personal interviews.
The criminals may either themselves provide misleading information, or intimidate others to do so.

The systematic investigation of the latter problem of adversarial social network analysis has received some attention from the attacker's perspective in recent literature, with several approaches developed for defeating analysis techniques such as centrality analysis, community detection \cite{waniek2018hiding}, link prediction \cite{waniek2018attack, zhou2019attacking}, and node classification~\cite{zugner2018adversarial}.
However, there have been scarcely any approaches investigating how to make SNA robust to such attacks. We propose the first approach for robust similarity-based \emph{link prediction}---a core problem in social network analysis---in the presence of adversarial edge deletion.

We begin by modeling network data collection as follows. The analyst submits a set of node-pair queries to the \emph{environment} which returns \emph{edge} or \emph{non-edge} in response to each query, corresponding to the assessment whether a queried pair of nodes are connected.
This is an \emph{abstraction} of most data collection approaches, such as field interviews, phone call monitoring (in criminal cases), etc.
Based on the query results, we assume that the analyst will construct a subgraph and use a similarity metric to assess the likelihood of the existence of target edges that are not in the set of queries. In our setting, an attacker can modify the query results by changing \emph{edge} to \emph{non-edge} (equivalently, delete edges from the observed subgraph) for a limited subset of queries in order to hide a target link.
In our running criminal network example, the criminals would intimidate some of the interviewees to not disclose existing relationships known to these. To counter such attacks, we assume that the analyst can make a subset of their queries \emph{reliable}. For example, they may elicit a particular relationship through multiple interviews as well as other means (such as monitoring communications), significantly reducing the likelihood that an existing link is successfully hidden.

We model the interaction between the analyst and the attacker as a non-zero-sum Bayesian Stackelberg game in which the defender (analyst) first commits to a set of reliable queries, and the attacker chooses the set of links they will delete after observing the analyst's decision. 
The Bayesian nature of this game captures the uncertainty of the analyst about both the network itself, and the attacker's preference about which link they wish to hide. We are interested in finding the Strong Stackelberg Equilibrium of this game.  We show that for all local similarity metrics finding the attacker's best response in this game is NP-hard, with the difficulty arising from the nature of tie-breaking. We then propose two principled algorithms to approximate the defender's optimal commitment strategy in the Stackelberg game, one that casts an approximate version of the problem as an integer linear program, and another that identifies the set of critical links that the attacker is likely to delete and associates each link with an estimate of damage to the defender.

We conduct extensive experiments on several random graph models as well as real networks. We show, surprisingly, that the attack will not always harm the defender---by virtue of this being a non-zero-sum game, there are instances when the attack may actually increase the defender's utility. In the more typical cases where the attack will decrease the defender's utility, our proposed heuristic algorithm dramatically reduces the damage with only a small proportion of reliable queries. We also show that it is not always useful to have more reliable queries, if they are not carefully chosen: in particular, increasing the number of randomly chosen reliable queries may at times decrease the defender's utility.

\section{Related Work}
Link prediction, as formulated by Liben-Nowell and Kleinberg~\cite{liben2007link}, considers the problem of predicting hidden or missing links based on the network structure as well as other side information. 
An important line of work in link prediction focuses on the design of node pair similarity metrics~\cite{lu2011link,wang2015link}, with the view that a higher similarity score for a pair indicates a greater likelihood of the existence of a link.
Such similarity metrics are commonly classified as neighbor-based \cite{leicht2006vertex}, path-based \cite{katz1953new,lu2009similarity}, and random-walk-based \cite{fouss2007random}, depending on the information used.

Recently, much effort has been devoted to analyzing the vulnerability of social network analysis methods to adversarial manipulation. For example, Waniek et al. \cite{waniek2018hiding} study the vulnerability of centrality measures, which are indicators of the importance of individuals or groups within a network, to adversarial manipulation. Specifically for link prediction, authors in \cite{waniek2018attack} and \cite{zhou2019attacking} study how attackers can lower the similarities among target links perceived by a network analyst, thus evading link prediction, by modifying network topology. Zhang et al. \cite{zhang2016measuring} experimentally analyzed the robustness of several similarity metrics in link prediction to random noise. While prior efforts primarily focus on attacking link prediction algorithms, there is scarcely any systematic analysis of how to make link prediction approaches robust to attack, which is the subject of our work.

There are also related works in the realm of adversarial machine learning \cite{vorobeychik2018adversarial}, specifically on attacking representation learning approaches (based on, e.g., random walk \cite{perozzi2014deepwalk} or graph convolutional networks \cite{kipf2016semi}) over network data. However, such works mainly focus on the attacks of some separate learning tasks, such as node classification \cite{zugner2018adversarial,DBLP:journals/corr/abs-1902-08412}. To the best of our knowledge, the attacks explicitly designed for such learning based link prediction approaches are still unknown.

Game theory has been extensively used in security domains to model and analyze the behavior of defenders and attackers in adverse situations. In an important class of Stackelberg game models \cite{conitzer2006computing,korzhyk2011stackelberg} the defender first commits to a defense strategy, and the attacker then observes this commitment decision and optimally responds.
The typical solution concept for such games is the Strong Stackelberg Equilibrium \cite{korzhyk2011stackelberg}, in which the attacker breaks ties in the defender's favor.
Such a framework naturally fits in many real-world situations, demonstrating considerable success in both theory and practice \cite{sinha2018stackelberg}. 
Bayesian Stackelberg Games \cite{kiekintveld2011approximation,jain2011quality} are an extension of this framework in which the attacker has an informational advantage, providing a natural modeling approach for our setting.


\section{Similarity-based Link Prediction}
The key idea behind similarity-based link prediction is to assess the
likelihood of the existence of the link between a pair of nodes by
calculating how topologically similar these nodes are.  While a large variety of similarity metrics are used in different prediction systems, we focus on a category of metrics termed \emph{local metrics}. For these, the similarity $\mathsf{Sim}(u,v|\mathcal{G})$ between two nodes $u$ and $v$ given an observed subgraph $\mathcal{G}$ only depends on ``two-hop'' information about $u$ and $v$: their neighbors and neighbors' neighbors. 
As a result, local metrics are easy to compute and do not require global information about the network.

\begin{table}[tbh]
	\renewcommand{\arraystretch}{1.1}
	\centering
	\caption{List of representative similarity metrics. Specifically, $N(u,v)$ denotes the set of common neighbors of $u$ and $v$ and $d(u)$ denotes the degree of a node $u$.}
	\label{Table-Sim_Metrics}
	\begin{tabular}{l|l|l}
		\hline
		\hline
		&{\bf Local Metrics} & $\mathsf{Sim}(u,v)$\\
		\hline
		\multirow{5}{*}{SLM}& Adamic-Adar & $\sum_{w\in N(u,v)} \frac{1}{\log d(w)}$\\
		& Resource Allocation & $\sum_{w\in N(u,v)} \frac{1}{d(w)}$\\
		& Common Neighbours & $|N(u,v)|$\\
		& Jaccard & $\frac{|N(u,v)|}{d(u) + d(v) - |N(u,v)|}$\\
		& S\o rensen & $\frac{2|N(u,v)|}{d(u)+d(v)}$ \\
		\hline
		
		\multirow{4}{*}{ASLM}&Salton & $\frac{|N(u,v)|}{\sqrt{d(u)d(v)}}$\\
		& Hub Promoted & $\frac{|N(u,v)|}{\min(d(u),d(v))}$\\
		& Hub Depressed & $\frac{|N(u,v)|}{\max(d(u),d(v))}$\\
		& Leicht-Holme-Newman & $\frac{|N(u,v)|}{d(u)d(v)}$\\
		\hline
		\hline		
	\end{tabular}
\end{table}

We list the metrics considered in this paper in Table~\ref{Table-Sim_Metrics}, which cover most of the popular metrics for link prediction. 
We find it useful to classify them as Symmetric Local Metrics (SLM) and Asymmetric Local Metrics (ASLM), as defined below in Section~\ref{sec-attacker-response}.


\section{Robust Link Prediction Model}
In our model, a network analyst (defender) faces an attacker whose goal is to hide certain links.

\subsection{Defense \& Attack Models}

While much of prior literature on link prediction is not explicit about how network data is acquired, modeling data acquisition is crucial for a principled approach to robust link prediction.
We model the data collection process as follows.
An analyst collects observations about the network via a set of queries $Q = \{(u_i,u_j)\}$ to the \emph{environment} (which is a proxy for actual data collection; for example, field interviews, communication monitoring, etc.), where each $(u_i,u_j)$ stands for a pair of nodes.
The environment responds with ``edge'' if $(u_i,u_j) \in Q$ is indeed an edge in the underlying network and with a ``non-edge'' otherwise.
Again, query response here is an abstraction; for example, it would correspond to an answer of a survey subject whether individuals $i$ and $j$ are friends.
Since data collection is costly, the number of such queries is limited. Hence, given the partial graph $\mathcal{G}_Q$ constructed with $Q$, the analyst employs link prediction algorithms to find whether there exist some other links in the network that have not been identified so far. Formally, the analyst wants to predict the existence of links among a set of node pairs, denoted by $H_D = \{(v_i,v_j)\}$, that do not appear in the observed network. Naturally, we assume $(v_i,v_j) \notin Q$, for any $(v_i,v_j) \in H_D$, since otherwise the existence of this edge is directly observed. We refer to $H_D$ as a \emph{target set} of the analyst.

We assume that the analyst uses similarity-based link prediction, i.e., she computes a similarity score $\mathsf{Sim}(v_i,v_j|\mathcal{G}_Q)$ between a pair of nodes $(v_i,v_j) \in H_D$ based on the observed subgraph $\mathcal{G}_Q$.  When clear from context, we often write the similarity metric simply as $\mathsf{Sim}(v_i,v_j)$. 
We assume that the analyst will predict that the link between $v_i$ and $v_j$ exists iff $\mathsf{Sim}(v_i,v_j) \ge \theta$, where $\theta$ is a pre-defined threshold.

The attacker aims to hide a target connection $H_A = (V_1,V_2)$ from the network analyst. Specifically, the attacker attempts to minimize the similarity score $\mathsf{Sim}(V_1,V_2|\mathcal{G}_Q)$ by modifying the results of a subset of the queries $Q$, following the model proposed by Zhou et al~\cite{zhou2019attacking}.
We restrict the attacker's ability to changing edges in $Q$ into non-edges, which is equivalent to deleting a subset of edges in $\mathcal{G}_Q$.
In practice, this can be achieved by making the existing links difficult to observe or measure (e.g., blocking communication channels, limiting communication, intimidating witnesses, etc.).  Since deleting links is typically costly (e.g., the connection between nodes comes from an actual need to communicate), we impose a constraint that the attacker can delete at most $k_A$ links.
Let the set of links removed by the attacker be denoted by $S_A \subseteq Q$.
The graph constructed by the analyst after the attack then becomes $\widehat{\mathcal{G}_Q} = \mathcal{G}_Q - S_A$.

We assume that the attacker knows the structure of the underlying graph $\mathcal{G}$, as well as the defender's target set $H_D$, modeling an informationally powerful attacker.\footnote{In fact, it suffices in our case for the attacker to only know local network structure for the target edge $H_A$, which is informationally quite plausible.}

To defend against the attack, we assume that the analyst can make a subset of queries $Q$ \emph{reliable} in the sense that the associated link information is accurately measured despite adversarial tampering.
For example, the analyst of a covert network can decide to devote sufficient resources (such as a background check, private investigation, etc.) to measure the connection between two nodes reliably. 
Let $S_D \subseteq Q$ denote the set of reliable queries chosen by the defender.
Clearly, reliably measuring links can be quite costly, and the analyst therefore faces a budget constraint that $|S_D| \leq k_D$ for an exogenously specified number of reliable queries $k_D$ she can make. For a combination of decisions $(S_D, S_A)$ by the analyst and the attacker about reliable queries and edges to remove, respectively, the observed subgraph becomes $\widehat{\mathcal{G}_Q} = \mathcal{G}_Q - S_A \setminus S_D$, where $S_A \setminus S_D = \{(u_i,u_j) \in S_A|(u_i,u_j) \notin S_D\}$.

\subsection{Bayesian Stackelberg Game Formulation}
At the high level, we have an adversarial situation, where the analyst attempts to distribute her budget of reliable queries within $Q$ so as to make link prediction vis-a-vis the target set of links $H_D$ robust against link removal attacks.
To formalize this, we model the interaction between the analyst and the attacker as a Bayesian Stackelberg game. In this game, the analyst first chooses the set $S_D$ of reliable queries.
The attacker then observes $S_D$, and chooses the set of edges $S_A$ to remove from $\mathcal{G}_Q$ (equivalently, the set of query answers to flip from ``edge'' to ``non-edge'').
Crucially, the analyst is uncertain about both $H_A$ (the attacker's target link to hide) and the true network $\mathcal{G}$ (as well as the derived subnetwork), which are both known to the attacker. We denote by $t = (H_A, \mathcal{G})$ the attacker's type or private information, upon which they can condition the choice of $S_A$.
Suppose that the analyst has a prior distribution $P$ over attacker's types $t$. Let the utilities of the attacker and the defender in the game given joint decisions $(S_D,S_A)$ and attacker type $t$ be $u_A(S_D,S_A;t)$ and $u_D(S_D,S_A;t)$, respectively.
The attacker can condition their strategy on both their type $t$ and the observation of the analyst's strategy $S_D$; we represent it as a function $g(S_D;t)$.

We are now ready to formally define the Strong Stackelberg equilibrium of the Bayesian Stackelberg game above.

\begin{definition}
	The strategy profile $\left\langle S_D^*, g^*\right\rangle $ forms a \emph{Strong Stackelberg Equilibrium (SSE)} of the Bayesian Stackelberg game if they satisfy the following:
	\begin{itemize}
		\item The defender plays a best response: $\forall \ S_D$,
		$$\mathbb{E}_{t\sim P}[u_D(S_D^*,g^*(S_D^*;t);t)] \geq \mathbb{E}_{t\sim P}[u_D(S_D,g^*(S_D;t);t)].$$
		\item The attacker plays a best response for each type $t$: 
		$$u_A(S_D,g^*(S_D;t);t) \geq u_A (S_D,g(S_D;t);t), \ \forall S_D,g.$$
		\item The attacker breaks ties optimally for the defender for each type $t$: $\forall S_D^*, g \in G(S_D^*;t)$, where $G(S_D^*;t)$ is the set of attacker's best responses to $S_D^*$ for attacker type $t$, $$u_D(S_D^*, g^*(S_D^*;t);t) \geq u_D(S_D^*,g(S_D^*;t);t).$$		
	\end{itemize}
\end{definition}

We now specialize the Bayesian Stackelberg game model to our problem by defining the utilities of both players.
First, recall that the goal of the attacker is to minimize the similarity of a target link $H_A$.
Consequently, $u_A(S_D,S_A;t) = -Sim(H_A)$.
To define the utility of the analyst, we begin by quantifying the quality of link prediction. Let  the loss of a prediction on a specific node pair $(v_i,v_j)$ be $l(\mathsf{Sim}(v_i,v_j)| \theta,y_{ij})$, where $\theta$ is a pre-defined threshold, $y_{ij} = 1$ indicates that $(v_i,v_j)$ is indeed an edge in the underlying graph and  $y_{ij} = -1$ otherwise. We do not assume a specific form of loss functions in our model as long as the loss is increasing in the similarity when $y_{ij} = -1$ and vise versa.
Then the total loss to the analyst is
\begin{align}
L(S_D,H_D|\theta,\mathcal{G},H_A) &= \sum_{(v_i,v_j) \in H_D} l(\mathsf{Sim}(v_i,v_j|\widehat{\mathcal{G}_Q}) | \theta,y_{ij}) \nonumber \\
&= -u_D(S_A,S_D),
\end{align}  
where $\widehat{\mathcal{G}_Q} = \mathcal{G}_Q - g(S_D;(H_A,\mathcal{G}))\setminus S_D$, with $\mathcal{G}_Q$ a restriction of the graph $\mathcal{G}$ to the node pairs in $Q$; note the explicit dependence here of the attacker's strategy on the analyst's, as well as on their type.
Of course, in order to compute the analyst's loss function, they need to know two additional pieces, the graph $\mathcal{G}$ and $H_A$, which, as we may recall, jointly comprise the attacker's type $t$ distributed according to $P$.
Thus, the analyst's actual expected loss function is
\[
\bar{L}(S_D,H_D|\theta) = \mathbb{E}_{(\mathcal{G},H_A) \sim P}[L(S_D,H_D|\theta,\mathcal{G},H_A)].
\]
The analyst's utility is then the negative expected loss.

In our analysis, we use the sample average to approximate the expected loss. Specifically, we use a collection of $K$ samples $S = \{(\mathcal{G}^i, H_A^i\}_{i=1}^K$, and use the average loss over these samples as the objective:
\begin{align}
\tilde{L}(S_D,H_D|\theta) =  \frac{1}{K}\sum_{(\mathcal{G}^i,H_A^i) \in S} L(S_D,H_D|\theta,\mathcal{G}^i,H_A^i).
\end{align}

To simplify notation, we henceforth write $g(S_D;\mathcal{G}_Q^i,H_A^i)$ as $S_A^i$, omitting the dependence on $S_D$ where it is evident from the context.

Computing the SSE of our Bayesian Stackelberg game (with respect to the approximate loss function above) then amounts to solving the following bi-level optimization problem (omitting the constant $1/K$ factor from the objective and abusing notation slightly to make the dependence of the loss function on the attack strategy $S_A^i$ explicit):
\begin{align}
\label{E:problem}
\begin{split}
\argmin_{S_D}\ & \sum_{(\mathcal{G}^i,H_A^i) \in S} L(S_D,H_D|\theta,\mathcal{G}_Q^i - S_A^{i*}\setminus S_D)\\  
\text{s.t.}\ &|S_D| \leq k_D,\\
& S_A^{i*} = \argmin_{S_A^i : |S_A^i| \le k_A}\  \mathsf{Sim}(H_A^i|\widehat{\mathcal{G}_Q^i}), \ \forall \ i.
\end{split}
\end{align}
We aim to find the analyst's optimal solution to Problem~\eqref{E:problem}.

\section{Complexity}
Recent work showed that maximizing the attacker's utility is computationally efficient if we are not concerned about tie-breaking~\cite{zhou2019attacking}.
However, we now show that in our setting, where tie-breaking is a crucial aspect of the SSE equilibrium concept, even computing a best response for the attacker is hard.

\begin{theorem}\label{thm-hard}
	Given $S_D$, computing the attacker's best response that breaks ties in the defender's favor is NP-hard for all local similarity metrics listed in Table~\ref{Table-Sim_Metrics}.
\end{theorem}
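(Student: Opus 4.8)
The plan is to prove hardness by a polynomial reduction from \textsc{Vertex Cover} restricted to $3$-regular graphs, which is NP-hard. The guiding intuition is exactly the one the paper flags: minimizing $\mathsf{Sim}(H_A)$ by itself is easy, so all of the difficulty must be pushed into the secondary, tie-breaking objective. I would therefore engineer an instance in which the attacker's primary optimum---the minimum achievable value of $\mathsf{Sim}(V_1,V_2)$---is attained by an exponentially large family of attacks that are \emph{exactly} tied in the attacker's payoff, and in which selecting the member of this family that minimizes the defender's loss is equivalent to finding a minimum vertex cover. The decision ``does the defender-optimal best response achieve zero loss?'' will then coincide with ``does $G'$ admit a vertex cover of size $t$?''.

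\emph{Construction.} Given a $3$-regular graph $G'=(V',E')$ and an integer $t$, introduce the attacker's target pair $H_A=(V_1,V_2)$ together with one ``selector'' node $w_l$ per vertex $l\in V'$, each joined to both $V_1$ and $V_2$, so that $N(V_1,V_2)=\{w_l\}$. For each edge $e_r=\{a,b\}\in E'$ add an ``element'' node $z_r$ adjacent to precisely $w_a$ and $w_b$, and place $(V_1,z_r)$ into the defender target set $H_D$ with true label $y_r=-1$ (a genuine non-edge). Set the attacker budget to $k_A=t$. The crucial device is the defender strategy handed to the attacker's subproblem: take $S_D=\{(V_2,w_l):l\in V'\}$, making every $V_2$-side edge reliable. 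This forces any removal of a common neighbor $w_l$ from $N(V_1,V_2)$ to be realized by deleting the $V_1$-side edge $(V_1,w_l)$.

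\emph{Why it works.} Because every productive deletion is forced onto the $V_1$ side, each one has the identical marginal effect: it lowers $|N(V_1,V_2)|$ by one and $d(V_1)$ by one, while leaving $d(V_2)$ and the degrees of all surviving $w_l$ untouched. Hence the post-attack value of $\mathsf{Sim}(V_1,V_2)$ depends only on the \emph{number} of neighbors removed, not on \emph{which} ones, and one checks row by row in Table~\ref{Table-Sim_Metrics} that a single removal strictly decreases it; the optimum is attained by removing exactly $t$, and all $\binom{|V'|}{t}$ such choices are payoff-identical, so the attacker's best-response set is precisely the size-$t$ vertex subsets of $G'$. Deleting $(V_1,w_l)$ also removes $w_l$ from $N(V_1,z_r)$ for exactly the edges incident to $l$, so a selection covers $e_r$ iff at least one endpoint of $e_r$ is chosen. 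Since $G'$ is regular and every $z_r$ has degree $2$, the quantity $\mathsf{Sim}(V_1,z_r)$ is the \emph{same} monotone function of $|N(V_1,z_r)|\in\{0,1,2\}$ for every $r$ and every attacker choice, so a single global threshold $\theta$ separates covered pairs (value $\le$ one common neighbor, loss $0$ under $y_r=-1$) from uncovered pairs (value at two common neighbors, loss $1$). The defender's total loss equals the number of uncovered edges; the defender-optimal tie-break minimizes it and reaches zero iff $G'$ has a vertex cover of size $t$.

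\emph{Main obstacle.} The delicate part is not any single reduction but making \emph{one} gadget serve \emph{all nine} metrics at once, since they key on different quantities ($|N|$, the endpoint degrees, or the $\log$/reciprocal weights of common-neighbor degrees). Two design choices carry this burden. First, the reliability trick guarantees \emph{exact} ties in the attacker's payoff---this is essential, because the SSE tie-break engages only under exact ties, so any merely ``approximately equal'' construction would collapse the argument. Second, regularity of $G'$ together with the uniform degree-$2$ element nodes forces a single threshold to work simultaneously for the degree-normalized metrics and for Adamic--Adar/Resource Allocation (where regularity makes every surviving common-neighbor weight a common constant $f_0$, so covered and uncovered pairs score $\le f_0$ versus $2f_0$). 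The residual work---verifying the monotone decrease of each metric under one removal and placing $\theta$ strictly inside the covered/uncovered gap---is routine but must be discharged for each row of Table~\ref{Table-Sim_Metrics}.
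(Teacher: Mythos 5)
Your reduction takes a genuinely different route from the paper's: the paper reduces from MaxCut, giving the attacker budget $k_A=|N(V_1,V_2)|$ so that \emph{every} one-deletion-per-common-neighbor attack drives $\mathsf{Sim}(V_1,V_2)$ to exactly $0$ for every metric in the table, and the hardness then lives in choosing which \emph{side} of each tuple $(V_1,w_i,V_2)$ to delete; you instead restrict the budget to $t$, protect the entire $V_2$ side, and put the hardness into choosing \emph{which} $t$ common neighbors to remove, reducing from Vertex Cover. The idea is attractive, but there is a genuine gap: the blanket claim that ``a single removal strictly decreases'' $\mathsf{Sim}(V_1,V_2)$ is false for two of the nine metrics the theorem quantifies over. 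In your gadget $d(V_1)=d(V_2)=|N(V_1,V_2)|=n$ initially, and after deleting $m$ edges $(V_1,w_l)$ one has $|N(V_1,V_2)|=d(V_1)=n-m$ while $d(V_2)=n$. Hence Hub Promoted gives $(n-m)/\min(n-m,n)=1$ and Leicht--Holme--Newman gives $(n-m)/((n-m)n)=1/n$: both are \emph{constant} in $m$, so deletions buy the attacker nothing.

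For those two metrics the failure is structural, not cosmetic: every attack within budget is then a best response --- including the empty attack and deletions of the $(w_a,z_r)$ edges, which also leave $\mathsf{Sim}(V_1,V_2)$ unchanged --- so the best-response set is not ``precisely the size-$t$ vertex subsets of $G'$,'' and the tie-breaking problem turns into a different covering problem whose equivalence to Vertex Cover you have not established (for Leicht--Holme--Newman it is further complicated by the fact that the relevant similarity values of the pairs $(V_1,z_r)$ shift with the number of deletions, so no single threshold $\theta$ separates covered from uncovered pairs across the whole best-response set; note also that on $3$-regular graphs any vertex cover has size at least $n/2$, which blocks the natural way to restore such a threshold). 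A local repair does exist: attach more than $t$ pendant neighbors to $V_1$, so that $d(V_1)>d(V_2)$ throughout any feasible attack; then Hub Promoted becomes $(n-m)/n$ and Leicht--Holme--Newman becomes $(n-m)/((n+D-m)n)$, both strictly decreasing in $m$, pendant and $(w_a,z_r)$ deletions are strictly wasteful for every metric, and your argument goes through. Without such a patch, the proof as written covers only seven of the nine metrics; the paper's device of letting the attacker annihilate the similarity entirely is exactly what makes the ties exact in a metric-independent way and lets one reduction serve the whole of Table~\ref{Table-Sim_Metrics}.
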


\begin{proof}
	Given a target pair $(V_1,V_2)$, the attacker can identify a set of tuples $\{(V_1,w_i,V_2)\}_{i=1}^{|N(V_1,V_2)|}$. His goal is to choose a proper partition of the common neighbors $\{w_i\} = W_1 \cup W_2$: for $w_i \in W_1$ he will delete edge $(V_1,w_i)$ and for $w_j \in W_2$ he will delete $(V_2,w_j)$. We assumed that none of the critical edges belongs to $S_D$; otherwise, there is no need for the attacker to choose which one of $(V_1,w_i)$ and $(V_2,w_i)$ to delete.
	
	We use the Common Neighbors as an example and the proof can be extended to other local metrics straightforwardly. As proved in \cite{zhou2019attacking}, the optimal strategy for the attacker (without considering tie-breaking) is to delete either $(V_1,w_i)$ or $(w_i,V_2)$ for each $w_i$. That is, any partition is a best response in minimizing $\mathsf{Sim}(V_1,V_2)$. However, different partitions will result in different utilities of the defender.
	
	Let $H_D = \{(u_i,u_j)\}$. We focus on a special case of the problem where all target links in $H_D$ are non-edges and any node in $H_D$ is a common neighbor of $V_1$ and $V_2$. Then, minimizing the loss on $H_D$ is equivalent to minimizing the sum of similarities of all node pairs $(u_i,u_j)$ in $H_D$, denoted as $S_H = \sum_{(u_i,u_j) \in H_D} \mathsf{Sim}(u_i,u_j)$. Thus, for the attacker, finding the best response breaking ties in the defender's favor is equivalent to finding a partition that minimizes $S_H$. We consider the decision problem termed $P_A$: can the attacker find a partition such that $S_D = k_a$?
	
	We use the decision version of the \emph{maximum cut} (MaxCut) for the reduction, which is known to be NP-complete. Given a graph $\mathcal{G} = (U,E)$, a cut is a partition of the nodes $V$ into two disjoint subsets $S$ and $T$. The cut-set is then the set of edges where each edge has one end-node in $S$ and the other one in $T$. The size of the cut-set is the number of edges in the set. MaxCut is to decide whether there is a cut such that the size of the cut set if at most $k$.
	
	Given an instance of MaxCut $(\mathcal{G} = (U,E),k)$, we construct an instance of $P_A$ as follows. We first construct a graph $\mathcal{H}$ where each node $u_i \in U$ is also a node in $H$. We then add two nodes $V_1$ and $V_2$ and connect every $u_i$ with both $V_1$ and $V_2$. Let $(u_i,u_j)$ be a target link in $H_D$ if and only if $(u_i,u_j)$ is an edge in $\mathcal{G}$. Then the problem $P_A$ is to decide whether there is a partition of $U$ such that $S_D =k_a =  |E| - k$. We show that MaxCut and $P_A$ are equivalent.
	\begin{figure}[h]
		\begin{center}
			\includegraphics[width=2in,height = 2cm]{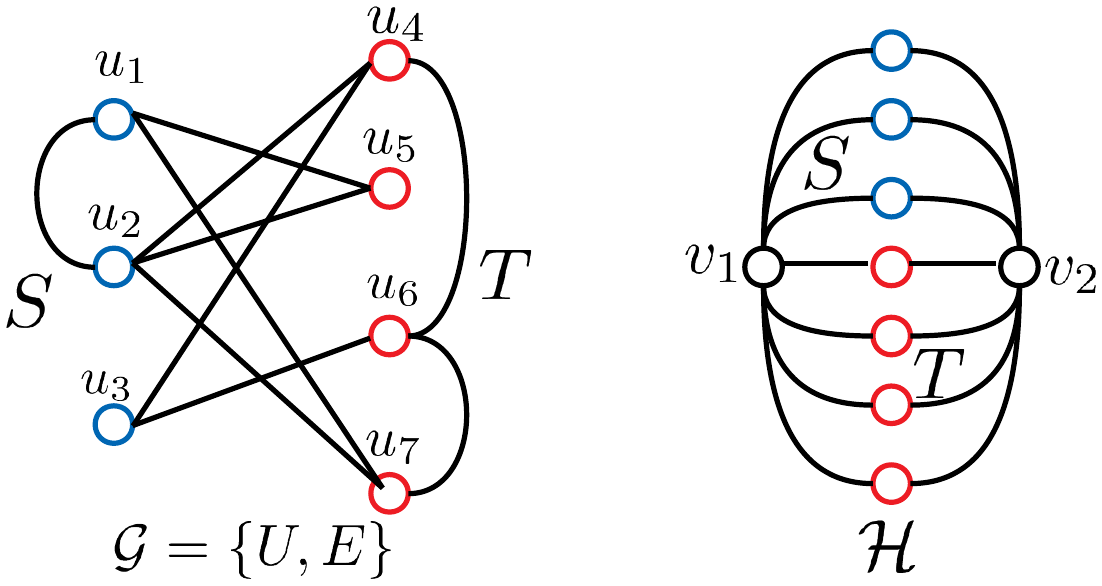}
			\caption{Example: construction of $\mathcal{H}$ from $\mathcal{G}$ in MaxCut.}
			\label{fig-np}
		\end{center}
	\end{figure}
	
	First, we show that if we can find a cut in graph $\mathcal{G}$ with size at least $k$, then we can find a partition of the nodes $U$ in graph $\mathcal{H}$ such that $S_D = |E| - k$.  Suppose the cut in graph $\mathcal{G}$ is $S$ and $T$ with size $k$ and the number of edges with both end-nodes in $S$ (respectively $T$) is $s$ (respectively $t$).  Then in graph $\mathcal{H}$, the attacker will delete all links $(u_i,V_1)$ for $u_i \in S$ and delete all links $(u_j,V_2)$ for $u_j \in T$. Now consider any target $(u_i,u_j)$. If $u_i$ and $u_j$ belong to different partitions (for example, $u_i\in S$ and $u_j \in T$), the similarity $\mathsf{Sim}(u_i,u_j) = 0$ as there is no common neighbors between $u_i$ and $u_j$ due to the deletion of links. If $u_i$ and $u_j$ belong to the same partition $S$, then $\mathsf{Sim}(u_i,u_j) = 1$ as they share only one common neighbor $V_2$. As there are $s$ edges with both end-nodes in $S$ in graph $\mathcal{G}$, we know there are $s$ target links in the partition $S$ in graph $\mathcal{H}$, each with similarity $1$. Similarly, we know there are $t$ target links in $T$ each with similarity $1$. Thus $S_D = s +t = |E| - k$. That is, we found a partition $S$ and $T$ of the nodes (which corresponds to a way of deleting edges) such that $S_D = |E| -k$. 
	
	Second, we show that if we find a partition of nodes in graph $\mathcal{H}$ such that $S_D \leq |E| -k$, then we found a cut of size at least $k$. Let the partition be $S$ and $T$ and the attacker delete all links $(u_i,V_1)$ for $u_i \in S$ and all links $(u_j,V_2)$ for $u_j \in T$. Consider a target link $(u_i,u_j)$ in $H_D$. If $u_i$ and $u_j$ belong to different partitions, then $\mathsf{Sim}(u_i,u_j) = 0$. If $u_i$ and $u_j$ belong to the same partition, we have $\mathsf{Sim}(u_i,u_j) = 1$. As $S_D = |E|- k$, we know there are $|E|- k$ such target links that the end-nodes belong to the same partition (either $S$ or $T$). Let $S$ and $T$ be a cut in graph $\mathcal{G}$. Since each target link in $H_D$ corresponds to an edge in $\mathcal{G}$, we have the number of edges with both end-nodes in the same partition is $|E| - k$. By definition, the size of the cut is $k$. That is, if we find a partition of nodes in graph $\mathcal{H}$ such that $S_D \leq |E| -k$, then we found a cut of size at least $k$.
\end{proof}
It is an immediate consequence that the problem of computing the SSE of our game is hard: simply let $k_D = 1$ (in this case, the problem is equivalent to computing the attacker's best response part of the SSE). Given that computing an optimal solution efficiently is out of the question, in what follows, we present principled approaches for computing an approximately optimal SSE strategy for the analyst.


\section{Solution Approach}

At the high level, our approach makes an assumption that the damages caused by deleting links are approximately independent, which significantly simplifies the attack. Under this assumption, we show that the attacker's  best response that breaks ties  in the defender's (analyst's) favor can be found efficiently for all local metrics. Based on this, we then compute an approximately optimal defender strategy.

\subsection{Independent Damage Approximation}

In essence, the hardness of finding the attacker's best response that breaks ties in favor of the defender comes from the fact that the effects of deleting each link are inter-dependent. That is, the damage to the defender caused by deleting a particular link is determined by the states (either deleted or not) of other links. To make the problem tractable, we make the approximation that deleting a link will cause a damage which is independent of the states of other links. Our experiments subsequently demonstrate the effectiveness of this approach.

Given the independence assumption, the results of Zhou et al.~\cite{zhou2019attacking} imply that for a sample $(\mathcal{G}^i , H_A^i)$, the attacker will only delete links connecting $V_1^i$ or $V_2^i$ with their common neighbors.
Specifically, the attacker will identify a subgraph $\mathcal{G}^i_A$ of $\mathcal{G}^i$ consisting of tuples $(V_1^i, w^i_j, V_2^i)$, where $w_j^i$  are common neighbors of  $V_1^i$ and $V_2^i$.  Denote the set of common neighbors by $W^i = \{w^i_j\}_{j=1}^{N^i}$, where $N^i = |N(V_1^i,V_2^i)|$.  Let $c_{jr}^i$ ($r \in \{1,2\}$) denote the damage caused by deleting link $(V_r^i, w_j^i)$, which is the change in the defender's loss
$L(S_D,H_D|\theta,\mathcal{G}^i)  - L(S_D,H_D|\theta,\mathcal{G}^i - \{(V_r^i, w_j^i)\}\setminus S_D)$.
Thus for each sample $(\mathcal{G}^i , H_A^i)$, the attacker can extract a weighed sub-graph $\mathcal{G}_A^i$, termed \emph{damage graph}, with each edge $(V_r^i,w^i_j)$  associated with a damage $c_{jr}^i$. Then under the independent damage assumption, the total damage $C^i$ caused by the attack is the summation of the damages $c_{jr}^i$ corresponding to the deleted links: $C^i = \sum_{(V_r^i,w^i_j)\in S_A^i} c_{jr}^i$.

We note that each individual damage $c_{jr}^i$ could be zero, positive, or negative. That is, the attack could possibly decrease the defender's loss, which reflects the non-zero-sum nature of the game between the attacker and defender. This is because deleting links could either increase or decrease the similarity score of a node pair in the defender's target set, which further will increase or decrease the defender's loss depending on the state (edge or non-edge) of that link.

\subsection{Computing Attacker's Best Response}\label{sec-attacker-response}
We seek to compute the attacker's  best response that breaks ties in favor of the defender under the independent damage assumption. We model strong attackers by assuming that $k^i_A = |N(V_1^i, V_2^i)|$.

We begin by considering attacker's strategy without considering tie-breaking. For an arbitrary sample $(\mathcal{G}, H_A)$ and its corresponding damage graph $\mathcal{G}_A$, the results in \cite{zhou2019attacking} show that the attacker would not delete the two links connecting to the same common neighbor simultaneously. Instead, he will choose one of $(V_1,w_j)$ and $(V_2,w_j)$ to delete for each $w_j$. Now, consider a tuple $(V_1,w_j,V_2)$. If both $(V_1, w_j)$ and $(w_j,V_2)$ are ``protected'' (i.e., reliably queried) by the defender, the attacker cannot delete either edge. If one of $(V_1, w_j)$ and $(w_j,V_2)$ is protected, the attacker will delete the other unprotected edge. Thus, the only non-trivial attacker decision is to select which one of $(V_1, w_j)$ and $(w_j,V_2)$ to delete when both of them are unprotected. For convenience, we term such edges critical edges. We assume that there are $k_A'$  ($k_A' \leq |N(V_1,V_2)|$) pairs of critical edges, among which the attacker will delete $k_1$ critical edges connecting to $V_1$ and $k_2$ connecting to $V_2$ ($k_1 + k_2 = k_A'$).

Based on the results in \cite{zhou2019attacking}, we classify the local metrics into Symmetric and Asymmetric metrics depending on the attacker's strategy in deleting critical edges. Specifically, for symmetric metrics, any combinations of $k_1$ and $k_2$ will maximize the attacker's utility as long as $k_1 + k_2 = k_A'$. For asymmetric metrics, the optimal solution requires some fixed  $k_1^*$ and $k_2^*$ with $k_1^* + k_2^* = k_A'$. The values of $k_1^*$ and $k_2^*$ can be efficiently computed given the degrees of $V_1$ and $V_2$ and the number of their common neighbors \cite{zhou2019attacking}.

Now, consider the attacker's best response when breaking ties in favor of the defender. Intuitively, the attacker will choose the set  $S_A$ that simultaneously minimizes the similarity of the target link and minimizes the total damage $C$.
The next two results characterize such a best response first for symmetric and then for asymmetric similarity metrics.

\begin{proposition}
	\label{prop-symmetric}
	For symmetric metrics, the attacker's best response that breaks ties in favor of the defender is: for each pair of links $(V_1,w_j)$ and $(V_2,w_j)$, if one of the links is protected, delete the other unprotected link and   if both links are unprotected, delete the link which is associated with a smaller damage.
\end{proposition}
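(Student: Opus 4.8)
The plan is to first pin down the entire set of attacker best responses that minimize the target similarity, and then, among these, select the one that also minimizes the defender's loss; by the independent-damage structure this second stage will decouple into one independent choice per critical pair.

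First I would invoke the characterization of the similarity-minimizing attack from \cite{zhou2019attacking}: for each tuple $(V_1,w_j,V_2)$ the attacker removes the common neighbor $w_j$ by deleting exactly one of the two incident edges, and never deletes both. Combined with the protection status this yields three cases per $w_j$: if both $(V_1,w_j)$ and $(V_2,w_j)$ lie in $S_D$ then $w_j$ cannot be removed; if exactly one is protected the attacker is forced to delete the unique unprotected edge; and if both are unprotected (a critical pair) the attacker may delete either edge. The defining property of symmetric metrics---that any split $(k_1,k_2)$ with $k_1+k_2 = k_A'$ equally minimizes the similarity---implies that for each critical pair the two side-choices are interchangeable as far as $\mathsf{Sim}(V_1,V_2)$ is concerned. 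Hence the best-response set $G(S_D;t)$ is exactly the product, over all critical pairs, of the binary choice of which side to delete, with the forced deletions held fixed.

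Next I would apply the SSE tie-breaking criterion. Breaking ties in the defender's favor means selecting, among all elements of $G(S_D;t)$, the one minimizing the defender's loss; under the independent-damage approximation this loss equals a constant plus the total damage $C = \sum_{(V_r,w_j)\in S_A} c_{jr}$. Writing $C$ as the fixed contribution of the forced (and both-protected) cases plus $\sum_{j\text{ critical}} c_{j,r_j}$, where $r_j\in\{1,2\}$ is the chosen side of critical pair $j$, the additive form makes the minimization separable: it is minimized by independently choosing, for each critical pair, the side with smaller damage $\min(c_{j1},c_{j2})$. This is exactly the rule asserted in the proposition, so the stated strategy is the attacker best response that breaks ties in the defender's favor.

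The main obstacle is the passage from ``minimize defender loss'' to ``minimize the additive total damage $C$, separably across critical pairs.'' This rests entirely on the independent-damage assumption, which guarantees both that the loss decomposes as a sum of per-edge damages $c_{jr}$ and that the per-critical-pair contributions do not interact; without it, the choice on one critical pair could alter the damage of another and the clean decoupling would fail---indeed, Theorem~\ref{thm-hard} shows the exact, interdependent problem is NP-hard. I would therefore state the decomposition of $C$ carefully, noting that the forced and both-protected cases contribute terms independent of the free choices, so that the $\argmin$ reduces termwise to the stated smaller-damage rule.
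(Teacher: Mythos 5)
Your proposal is correct and follows essentially the same route as the paper's proof: handle the forced deletion when exactly one edge of a pair is protected, observe that for symmetric metrics the per-pair choice is unconstrained (any split $k_1+k_2=k_A'$ minimizes the similarity), and then use the additive independent-damage form of the defender's loss to minimize separably, deleting the smaller-damage edge in each critical pair. The paper phrases this last step as an unconstrained binary program in variables $y_j$ with solution $y_j=1$ iff $c_{j1}-c_{j2}\leq 0$, which is exactly your termwise $\min(c_{j1},c_{j2})$ rule.
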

\begin{proof}
	For each pair of links $(V_1,w_j)$ and $(V_2,w_j)$, if one of the links is protected, the attacker will delete the other unprotected link to minimize $\mathsf{Sim}(V_1,V_2)$.
	
	When both links are not protected, we use a binary variable $y_j$ to denote the attacker's decision regarding the tuple $(V_1,w_j,V_2)$, $j =1,2,\cdots,k_A'$. Specifically, $y_j = 1$ means that the attacker will delete edge $(V_1,w_j)$ and $y_j = 0$ means that the attacker will delete $(w_j,V_2)$. The attacker will minimize the total damage $C$, which can be written as:
	\begin{align} 
	\label{OPT-attacker}
	\argmin_{y}\ C = \sum_{j=1}^{k_A'} c_{j1} y_j + c_{j2} (1-y_j)
	\end{align}
	
	For symmetric metrics, the above optimization problem is unconstrained, as every combination of $k_1$ and $k_2$ will maximize the attacker's utility. Then it's straightforward to obtain that the optimal solution is to set $y_j = 1$ if and only if $c_{j1} - c_{j2} \leq 0$ for $j = 1,2,\cdots,k_A'$.
\end{proof}

\begin{proposition}
	\label{prop-asymmetric}
	For asymmetric metrics, the attacker's best response that breaks ties in favor of the defender is : i) for each pair of links $(V_1,w_j)$ and $(V_2,w_j)$, if one of the links is protected, delete the other unprotected links; ii) among all the unprotected link pairs, select $k_1^*$ common neighbors $w_j$ in ascending order of $(c_{j1} - c_{j2})$ and delete the corresponding links $(V_1,w_j)$, and for the remaining $w_j$, delete $(V_2,w_j)$.
\end{proposition}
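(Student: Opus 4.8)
The plan is to reduce the tie-breaking step to a cardinality-constrained linear minimization and solve it by sorting. First I would invoke the primary-objective analysis already established in the preceding discussion: for an asymmetric metric the similarity $\mathsf{Sim}(V_1,V_2)$ is minimized only when the number of critical edges deleted on the $V_1$ side equals the fixed value $k_1^*$ (with $k_2^* = k_A' - k_1^*$ on the $V_2$ side), a quantity computable from $d(V_1)$, $d(V_2)$, and $|N(V_1,V_2)|$ via \cite{zhou2019attacking}. For every pair $(V_1,w_j),(V_2,w_j)$ in which exactly one link is protected, the attacker's choice is forced—he must delete the unprotected link—which immediately establishes part (i). Hence the only remaining freedom, and therefore the entire content of the tie-breaking rule, lies in deciding, among the $k_A'$ fully unprotected (critical) pairs, which $k_1^*$ of them to delete on the $V_1$ side.

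Next I would encode this residual choice exactly as in Proposition~\ref{prop-symmetric}, writing $y_j = 1$ when the attacker deletes $(V_1,w_j)$ and $y_j = 0$ when he deletes $(V_2,w_j)$, so that the total damage is
\[
C = \sum_{j=1}^{k_A'} c_{j1} y_j + c_{j2}(1-y_j) = \sum_{j=1}^{k_A'} c_{j2} + \sum_{j=1}^{k_A'} (c_{j1} - c_{j2})\, y_j .
\]
The key difference from the symmetric case is that the problem is now constrained: minimizing $\mathsf{Sim}(V_1,V_2)$ forces $\sum_{j} y_j = k_1^*$. Since $\sum_j c_{j2}$ is a constant independent of $y$, breaking ties in the defender's favor is equivalent to minimizing the linear objective $\sum_j (c_{j1} - c_{j2})\, y_j$ over all binary vectors $y$ with exactly $k_1^*$ ones.

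Finally I would solve this cardinality-constrained selection by a standard exchange argument. Sorting the coefficients $(c_{j1}-c_{j2})$ in ascending order and setting $y_j=1$ for the $k_1^*$ smallest yields a feasible solution; to see it is optimal, suppose some feasible $y$ sets $y_a=1$ and $y_b=0$ with $(c_{a1}-c_{a2}) > (c_{b1}-c_{b2})$. Swapping these two assignments preserves $\sum_j y_j = k_1^*$ while strictly decreasing the objective by $(c_{b1}-c_{b2}) - (c_{a1}-c_{a2}) < 0$, contradicting optimality; iterating the swaps shows the sorted choice is optimal, which is precisely rule (ii). The one place demanding care is the interface between the two objectives: I must argue that fixing $\sum_j y_j = k_1^*$ is both \emph{necessary} for similarity minimization—so that no lower-damage strategy with a different split is admissible—and \emph{compatible} with the forced deletions arising from the partially protected pairs, which shift the effective degrees of $V_1$ and $V_2$ before the critical split is chosen. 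Once this separation of the primary and secondary objectives is justified, the selection argument itself is routine.
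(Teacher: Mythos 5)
Your proof takes essentially the same route as the paper's: encode the residual tie-breaking choice with binary variables $y_j$, impose the constraint $\sum_j y_j = k_1^*$ coming from the asymmetric-metric analysis of \cite{zhou2019attacking}, rewrite the damage as a constant $\sum_j c_{j2}$ plus $\sum_j (c_{j1}-c_{j2})\,y_j$, and minimize by selecting the $k_1^*$ smallest coefficients. The only differences are that the paper dismisses optimality of the greedy selection with ``clearly'' where you supply the standard exchange argument, and that you explicitly flag (without fully resolving) the interaction between the forced deletions from partially protected pairs and the value of $k_1^*$---a subtlety the paper's proof does not address at all.
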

\begin{proof}
	For asymmetric metrics, the attacker solves optimization problem (\ref{OPT-attacker}) with an extra constraint $\sum_{j=1}^{k_A'} y_j = k_1^*$. Rewrite the objective as $C = \sum_{j=1}^{k_A'} c_j y_j + B$, where $c_j = c_{j1} - c_{j2}$ and $B = \sum_{j=1}^{k_A'} c_{j2}$ is a constant. Clearly, greedily setting $y_j = 1$ in ascending order of $(c_{j1} - c_{j2})$ gives the optimal solution.
\end{proof}

\subsection{Computing an Approximately Optimal Strategy for the Analyst}

Based on the attacker's best response characterized above, we propose two algorithms to find the defender's strategy. The first one, termed \textit{IDOpt} (Independent Damage Optimization), formulates the defender's problem as a nonlinear integer program which can be linearized using standard techniques and whose solution yields the defender's optimal strategy for symmetric metrics under the independent damage approximation. The second one, termed IDRank (Independent Damage Ranking), ranks the \emph{importance} of each link based on the accumulated damages, avoiding solving the optimization problem and thereby allowing a significant improvement in scalability.

\paragraph{IDOpt}
For each sample, the defender is facing an attacker strategically deleting edges over the damage graph $\mathcal{G}^i_A$. The challenging part is that as the underlying graph $\mathcal{G}^i$ is sampled over the same node set according to some distribution, the damage graphs can have overlapping edges (although they are independent in the view of different types of attackers). This makes finding the defender's best response regarding a single sample meaningless. Instead, the defender needs to jointly consider all the samples.

We first show that for symmetric metrics, the defender's problem can be formulated as a nonlinear integer program with linear constraints. Specifically, we use a binary variable $x^i_{jr}$ ($r =  1,2$) to denote the defender's decision of protecting (reliably querying) edge $(w_j,V^i_r)$, where $x^i_{jr} = 1$ means the defender choose to protect  the edge and $x^i_{jr} = 0$ otherwise.

Consider the tuple $(V_1^i,w^i_j,V_2^i)$, the defender has four different options regarding protecting the two links $(V_1^i,w^i_j)$ and $(V_2^i, w^i_j)$. Specifically, when the defender chooses to protect neither links, the expected damage is $\min \{ c^i_{j1},c^i_{j2}\}$, based on the attacker's best response. Thus, the expected damage to the defender regarding tuple $(V_1^i,w^i_j,V_2^i)$ is 
\begin{align}
\label{eqn-cost}
c_j^i(x^i_{j1},x^i_{j2}) = &c^i_{j2}x^i_{j1}(1-x^i_{j2}) + c^i_{j1}(1-x^i_{j1})x^i_{j2} \nonumber \\
&+ \min\{c^i_{j1},c^i_{j2}\}(1-x^i_{j1})(1-x^i_{j2}).
\end{align}

Under the independent damage approximation, the total expected damage to the defender is 
\begin{align}
C (\mathbf{x}) =   \sum_{i=1}^{K}\sum_{j=1}^{N^i} c_j^i(x^i_{j1},x^i_{j2}),
\end{align}
where $N^i$ denotes the number of common neighbors of $V_1^i$ and $V_2^i$ and $\mathbf{x}$ denotes the defender's joint decision.  Then minimizing the defender's total loss is equivalent to minimizing the total expected damage over all samples. The defender solves the following integer programming problem:
\begin{align}\label{eqn-qp}
\min_{\mathbf{x}}\  C(\mathbf{x}), \quad \text{s.t.}\ \sum_{i}^{K}\sum_{j=1}^{N^i} (x^i_{j1} + x^i_{j2}) \leq k_D.
\end{align}
The nonlinear terms involve only pairwise products of binary decision variables.
Since each such term can be linearized using standard techniques, the optimization problem can be cast as an integer linear program. 
We note that the decision variables $x^i_{jk}$ many appear multiple times in the samples as the critical edges may overlap in the reduced sub-graphs. However, each $x^i_{jk}$ is counted once in the above constraint. From the above analysis, we have the following proposition regarding the defender's optimal strategy.

\begin{proposition}
	Suppose the attacker's best response is as specified by Proposition~\ref{prop-symmetric}, then the solution to the integer program (\ref{eqn-qp}) gives the defender's optimal strategy.
\end{proposition}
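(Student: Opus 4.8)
The plan is to show that, under the independent damage approximation and assuming the attacker plays the best response characterized in Proposition~\ref{prop-symmetric}, the defender's sample-averaged loss equals the objective $C(\mathbf{x})$ of program (\ref{eqn-qp}) up to an additive constant, and that the feasible region of (\ref{eqn-qp}) coincides exactly with the set of budget-feasible protection strategies. Optimality of the program's solution then follows immediately, since a minimizer of the loss over feasible strategies is mapped to a minimizer of $C(\mathbf{x})$ over feasible $\mathbf{x}$, and conversely.

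First I would fix a sample $(\mathcal{G}^i, H_A^i)$ and verify that the per-tuple cost $c_j^i(x^i_{j1}, x^i_{j2})$ in (\ref{eqn-cost}) is exactly the damage incurred on tuple $(V_1^i, w_j^i, V_2^i)$ when the attacker follows Proposition~\ref{prop-symmetric}. This is a routine case check over the four configurations $(x^i_{j1}, x^i_{j2}) \in \{0,1\}^2$: when both edges are protected the attacker can delete neither, so the damage is $0$; when exactly one is protected the attacker deletes the unprotected edge, yielding $c^i_{j2}$ or $c^i_{j1}$; and when neither is protected the tie-breaking rule (delete the smaller-damage edge) yields $\min\{c^i_{j1}, c^i_{j2}\}$. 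Evaluating the quadratic (\ref{eqn-cost}) at these four arguments reproduces exactly these values. Because the strong attacker's budget $k_A^i = |N(V_1^i, V_2^i)|$ permits one deletion per tuple, every tuple is resolved independently in this manner.

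Second I would invoke the independent damage assumption to aggregate across tuples and samples. By that assumption the total change in the defender's loss on sample $i$ is the sum of the individual damages of the deleted edges, hence $\sum_{j=1}^{N^i} c_j^i(x^i_{j1}, x^i_{j2})$, and the loss equals a fixed baseline (the loss under no attack, which does not depend on $S_D$ since protection matters only against deletions) plus this aggregate. Summing over the $K$ samples and discarding the constant $1/K$ factor together with the $\mathbf{x}$-independent baseline shows that minimizing $\tilde{L}(S_D,H_D|\theta)$ is equivalent to minimizing $C(\mathbf{x}) = \sum_{i=1}^{K}\sum_{j=1}^{N^i} c_j^i(x^i_{j1}, x^i_{j2})$, which is precisely the objective of (\ref{eqn-qp}).

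The step I expect to require the most care is matching the constraint to the budget $|S_D| \le k_D$. The subtlety is that a single physical reliable query can correspond to an edge appearing in the damage graphs of several samples, so the decision variables associated with that edge must be identified across those samples and charged only once against the budget; otherwise the left-hand sum would overcount. Once the variables are shared in this way, $\sum_{i}\sum_{j} (x^i_{j1} + x^i_{j2})$, read as a sum over distinct protected edges, equals $|S_D|$, so the feasible set of (\ref{eqn-qp}) is exactly the set of strategies using at most $k_D$ reliable queries. Combining the three parts establishes a loss-preserving bijection between feasible defender strategies and feasible points of the program, whence any optimal solution of (\ref{eqn-qp}) yields an optimal defender strategy under the stated assumptions.
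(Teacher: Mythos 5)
Your proposal is correct and takes essentially the same approach as the paper, whose own justification for this proposition is exactly the analysis preceding it: the four-case verification that (\ref{eqn-cost}) encodes the attacker's tie-breaking response, the aggregation of per-tuple damages into $C(\mathbf{x})$ under the independent damage assumption, and the remark that decision variables shared across samples are charged only once in the budget constraint. Your write-up merely makes explicit two points the paper leaves implicit (that the no-attack baseline loss is independent of $S_D$, and that protecting edges outside the damage graphs is never beneficial), so it is the same argument in slightly greater detail.
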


\textit{IDOpt} is summarized in Alg.~\ref{alg-DEQP}. We also use \textit{IDOpt} as a heuristic for asymmetric metrics. Specifically, we solve Eqn~(\ref{eqn-qp}) to obtain the defender's strategy while in the simulated attacks, we let the attacker follow  the strategy as stated in Proposition~\ref{prop-asymmetric}.

\begin{algorithm}
	\caption{IDOpt}\label{alg-DEQP}
	\begin{algorithmic}[1]
		\For{$i = 1,2,\cdots,K$}
		\State generate sample $(\mathcal{G}^i, H_A^i)$
		\State add $c_j^i(x_{j1}^i,x_{j2}^i)$ to objective \Comment{defined in Enq~(\ref{eqn-cost})}
		\EndFor
		\State construct integer program \Comment{defined in Eqn~(\ref{eqn-qp})}
		\State solve integer program, output $\mathbf{x}$ \Comment{$\mathbf{x}$: reliable queries}
	\end{algorithmic}
\end{algorithm}

\paragraph{IDRank}

When the size of the graph or the number of samples becomes large, solving the integer program will not scale. For this purpose, we propose a second approach \textit{IDRank}.
At the high level, \textit{IDRank} is guided by considering the defender's optimal strategy when there is only one sample. Then \textit{IDRank} will assign an importance score to each critical edge and the edges are ranked by their accumulated importance scores. Finally, the edges with high scores are identified as those that the defender needs to protect.

Let $\mathcal{G}_A$ be the damage graph extracted from a sample. Let each common neighbor $w_j$ of $V_1$ and $V_2$ be associated with a weight $c_j = \min\{c_{j1},c_{j2}\}$. Let $N_p$ be the number of common neighbors whose weights are positive. We have the following proposition characterizing the defender's (analyst's) optimal strategy for a single attack sample.

\begin{proposition}
	\label{prop-rank}
	Suppose the attacker's best response is as specified by Proposition~\ref{prop-symmetric}.  The defender's optimal strategy is: 1) when $K_D \geq 2N_p$, the defender will protect edges $(w_j,V_1)$ and $(w_j,V_2)$ for $w_j$ whose weights are positive; 2) when $k_D < 2N_p$, the defender will select $\floor{\frac{k_D}{2}}$ common neighbors $w_j$  in descending order of their weights and protect all $(w_j,V_1)$ and $(w_j,V_2)$ for selected $w_j$. 
\end{proposition}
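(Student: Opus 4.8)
The plan is to exploit the fact that, under the independent damage approximation, the total expected damage for a single sample decomposes additively over the common-neighbor tuples $(V_1,w_j,V_2)$, with the tuples coupled only through the shared budget constraint $\sum_j (x_{j1}+x_{j2}) \le k_D$. I would therefore first analyze a single tuple in isolation. Combining the attacker's best response from Proposition~\ref{prop-symmetric} with the per-tuple damage in Eqn~(\ref{eqn-cost}), each tuple admits exactly four defender choices, which I would tabulate as (budget spent, resulting damage): protecting neither link costs $0$ and yields damage $c_j=\min\{c_{j1},c_{j2}\}$; protecting $(V_1,w_j)$ only or $(V_2,w_j)$ only each costs $1$ and yields damage $c_{j2}$ or $c_{j1}$ respectively; and protecting both links costs $2$ and yields damage $0$.

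The crux of the argument is a dominance observation showing that spending a single unit of budget on any tuple is never worthwhile. Assuming without loss of generality $c_{j1}\le c_{j2}$ (so $c_j=c_{j1}$), protecting $(V_2,w_j)$ alone forces the attacker to delete $(V_1,w_j)$ and leaves the damage at $c_{j1}=c_j$, exactly the no-protection value, while protecting $(V_1,w_j)$ alone forces deletion of $(V_2,w_j)$ and raises the damage to $c_{j2}\ge c_j$. Hence one unit of budget on a tuple buys zero reduction in damage, and only the two-unit option (protecting both links) strictly helps, reducing the tuple's damage from $c_j$ to $0$, i.e., by $c_j$. Consequently the defender should fully protect a tuple precisely when $c_j>0$, since for $c_j\le 0$ the ``reduction'' is non-positive and merely wastes budget.

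With single-link protection ruled out, I would recast the defender's problem as selecting a set $P$ of tuples to fully protect so as to maximize the total reduction $\sum_{j\in P} c_j$ subject to $2|P|\le k_D$, i.e., $|P|\le \floor{k_D/2}$ (any leftover odd unit of budget is wasted). Because every selected tuple carries the identical cost of two budget units, this is not a general knapsack but a plain selection of the highest-value items: the optimum is to take the positive-weight tuples with the largest $c_j$. Splitting on the budget then yields the two stated cases---when $k_D\ge 2N_p$ we have $\floor{k_D/2}\ge N_p$, so all $N_p$ positive-weight tuples can be protected (case 1); when $k_D<2N_p$ we can afford only $\floor{k_D/2}<N_p$ of them, so we take those with the largest weights in descending order (case 2).

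The main obstacle is the dominance step: the non-obvious point is that protecting a single link is useless, because the attacker simply reroutes its deletion to the remaining unprotected link, which (by the $\min$ defining $c_j$) costs the defender at least as much. Once this is established, the separability granted by the independent damage approximation together with the equal per-tuple cost collapses the problem to an elementary greedy selection, and the two cases follow immediately. I would also record the minor bookkeeping points---negative-weight tuples are never protected, and an odd budget leaves one unused unit---to make the case split exhaustive.
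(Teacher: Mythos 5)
Your proposal is correct and follows essentially the same route as the paper's own proof: a per-tuple case analysis showing that protecting a single edge of a tuple yields no reduction in damage (the attacker simply deletes the other edge, costing at least $c_j = \min\{c_{j1},c_{j2}\}$), that protecting both edges reduces the damage by exactly $c_j$, and that non-positive-weight tuples are never worth protecting, followed by greedy selection of the largest positive weights under the budget. Your explicit framing of the final step as an equal-cost selection problem (and the remark that an odd leftover budget unit is wasted) only makes slightly more careful what the paper asserts directly.
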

\begin{proof}
	When the attacker's best response is as stated in Proposition~\ref{prop-symmetric}, the total damage is $C = \sum_{j=1}^N c_j$, as the attacker will choose the edge with smaller damage to delete for each tuple $(V_1,w_j,V_2)$. 
	Consider a common neighbor $w_j$ whose weight $c_j$ is non-positive. Protecting one or both of $(V_1,w_j)$ and $(V_2,w_j)$ will not decrease $C$. For a common neighbor $w_j$ with positive weight, suppose $c_{j1} \leq c_{j2}$, i.e., $c_j = c_{j1} >0$. If the defender chooses to protect edge $(V_1,w_j)$, $C$ will increase by $c_{j2} - c_{j1}$, which is non-negative. If the defender chooses to protect edge $(V_2,w_j)$, $C$ will not change. If the defender chooses to protect both edges, $C$ will decrease by $c_{j1}$, which is positive. To minimize $C$, the defender's optimal strategy is to protect both edges. Thus, when $K_D \geq 2N_p$, the defender will protect edges $(w_j,V_1)$ and $(w_j,V_2)$ for $w_j$ whose weights are positive. When  $k_D < 2N_p$, the defender will select $\floor{\frac{k_D}{2}}$ common neighbors $w_j$  in descending order of their weights and protect all $(w_j,V_1)$ and $(w_j,V_2)$ for selected $w_j$. 
\end{proof}

Proposition~\ref{prop-rank} states that when there is a single sample, the defender will  protect the tuples $(V_1,w_j,V_2)$ (i.e., protect both edges $(V_1,w_j)$ and $(V_2,w_j)$) only if deleting both edges will cause positive damage. This gives some intuition when the defender jointly considers all the samples.

Based on this, \textit{IDRank} works as follows. For each damage graph $\mathcal{G}^i_A$, the defender will identify those common neighbors whose weights (i.e., $c^i_j = \min \{c^i_{j1},c^i_{j2}\}$) are positive and assign an importance score $c^i_j$ to both edges $(w^i_j, V^i_1)$ and $(w^i_j,V^i_2)$. After processing all the damage graphs, the defender obtains a list of edges ranked by their importance scores, from which the defender can pick the top $k_D$ edges to protect. \textit{IDRank} is summarized in Alg.~\ref{alg-DERank}.

\begin{algorithm}
	\caption{IDRank}\label{alg-DERank}
	\begin{algorithmic}[1]
		\State initialize a weight vector $\mathbf{w}$ for all possible edges $(V^i_r,w^i_j)$
		\For{$i = 1,2,\cdots,K$}
		\State generate sample $(\mathcal{G}^i, H_A^i)$
		\For{$j = 1,2,\cdots,N^i$}
		\If{$c_j^i > 0$}  \Comment{$c_j^i = \min\{c_{j1}^i,c_{j2}^i\}$}
		\State add $\mathbf{w}[(V^i_r,w^i_j)]$ by $c_j^i$ for $r = 1,2$
		\EndIf
		\EndFor
		\EndFor
		\State rank $\mathbf{w}$ in descending order
		\State output top $k_D$ edges \Comment{reliable queries}
	\end{algorithmic}
\end{algorithm}

\section{Experiments}

\subsection{Datasets}
\paragraph{Synthetic} We consider two synthetic data sets generated
from two random graph models. Both of these two models will generate
graphs with a power-law distribution, which are used to model a large
range of social networks. The first data set, denoted as PA, are
generated from the Barabasi-Albert \cite{barabasi1999emergence} (or
Preferential Attachment) model. Each of the graphs has $n = 500$
nodes, and the average degree of each node is $\sim 10$.  The second dataset is generated from the configuration model \cite{catanzaro2005generation}, that is used to generate graphs with pre-defined degree distributions. The generated graphs have a degree distribution satisfying $P(k) \propto k^{-\gamma}$, where $\gamma$ is a controllable parameter. We set $n=500$ and $\gamma = 2.0$. In the generated data set (PLD), each node in the graphs has an average degree of around $5$. Both graph generators are implemented through SNAP \cite{leskovec2016snap}. 

\paragraph{Real networks} We consider two real social networks from \cite{rozemberczki2018gemsec}, denoted as TVShow and Gov, which represent Facebook pages of two categories ( TV show and government, respectively). The nodes represent pages, and the (undirected) edges represent the ``likes'' among them. TVShow has $3892$ nodes and $17262$ edges, and Gov has $7057$ nodes and $89,455$ edges, which is denser. We use the random-walk (with restart probability $c = 0.15$) sampling approach \cite{leskovec2006sampling} to generate random subgraphs each having $500$ nodes from TVShow and Gov. 

\subsection{Attack and Defense Methods}
We consider three attack and three defense methods. The first attack is proposed in \cite{zhou2019attacking}, which we term \textit{LinkDel}, and  it will delete links  according to  Proposition ~\ref{prop-symmetric} and Proposition~\ref{prop-asymmetric}.  The second attack, termed \textit{UnbiasedDel}, is based on a heuristic approach from \cite{waniek2018attack}. Specifically, \textit{UnbiasedDel}  will delete one of the two links $(V_1,w)$ and $(V_2,w)$ without bias in case they are both not protected, for each common neighbor $w$ of the attacker's target nodes. The third attack \textit{RandDel} is motivated by \cite{zhang2016measuring} where they measure the robustness of link prediction algorithms through random perturbation on the graphs. To simulate such random perturbation, \textit{RandDel} will delete each \emph{unprotected} link $(V_i, w)$ ($i=1,2$) with probability $p=0.5$.

The first two defense methods are \textit{IDOpt} and
\textit{IDRank}. We also consider a third defense as a baseline,
termed \textit{PPN} (Protect Potential Neighbors). \textit{PPN} will
protect a subset of links randomly sampled from a set of critical
links $E_c$, which are links between the defender's target node set
$V_D$ and the rest of the nodes in the network.

\paragraph*{Evaluation metric}
We evaluate the defense performance by simulating $2000$ independent attacks and measuring the changes in the accumulated loss. Specifically, let $L_0$ be the defender's accumulated loss when there is no attack. Let $L_A$ be the defender's loss under some attack $A$  when the defender cannot make any reliable queries. We use $L_D$ to denote the loss when the defender make reliable queries according to a certain defense strategy $D$. We are primarily interested in \emph{damage prevention}, which measures the amount of damage that can be prevented by defense. Formally, we define a \emph{damage prevention ratio} $\mathsf{DPR}^D_A$ as the percentage of damage that is prevented: 
\begin{align*}
\mathsf{DPR}^D_A = \frac{L_A - L_D}{L_A - L_0},
\end{align*}
where $D\in \{\textit{IDOpt},\textit{IDRank},\textit{PPN} \}$ represents a defense strategy and $A \in \{\textit{LinkDel},\textit{UnbiasDel},\textit{RandDel} \}$ denotes an attack method.  We note that a larger $\mathsf{DPR}$ means that the defense strategy is more effective, and $\mathsf{DPR}$ is not necessarily smaller than $1$, as theoretically the attacks may decrease the defender's loss.

\subsection{Vulnerable Cases}

In this section, we empirically evaluate the effect of attacks in
different scenarios, using
\textit{LinkDel} as a representative attack, and identify the vulnerable cases where the attack
has a relatively higher impact on the defender's loss.  Let $V_D$ (respectively, $V_A$) be the
union of end-nodes appeared in $H_D$ (respectively, $H_A$). We thus
classify the attacks by the distributions of $V_D$ and
$V_A$. Specifically, we consider two cases of $V_D$, termed as
\emph{clustering} and \emph{sparse}, where the nodes in $V_D$ are
randomly drawn from high degree nodes and from all nodes in the
network, respectively. We also consider two cases of $V_A$, termed
\emph{targeted}, where $V_A \subset V_D$, and \emph{sparse}, where $V_A$ are randomly drawn from all nodes in the network, respectively. As a result, we consider four different attack scenarios in combination: Random Sparse Attack (RSA), Random Clustering Attack (RCA), Targeted Sparse Attack (TSA), and Targeted Clustering Attack (TCA).

We first empirically evaluate the damage to the defender caused by each
attack. In our experiments, we keep $V_D$ fixed, which is implemented
by renumbering the node IDs in the sample graphs.  
We measure the accumulated losses from $1$k simulated attacks in each attack scenarios. We define the damage as the change in the accumulated loss before and after the attacks. In our experiments, we use the exponential loss function, defined as $l(e) = \exp(- y_e \beta (\mathsf{Sim}(e) - \theta))$, where $e$ represents a node pair in $H_D$, $\beta$ is a parameter, and $\theta$ is a pre-defined threshold. The relative quantities of the damages are similar for other standard loss functions. We present the results for four  representative metrics (Table~\ref{table-damage} ): CN, S\o rensen, RA, and Salton, that have their own features. Specifically, CN only considers the number of common neighbors; S\o rensen and Salton can be considered as generalized CN adjusted by node degrees; RA computes the degrees of the common neighbors. Among these, Salton is asymmetric and the others are symmetric.

\begin{table}[h]
	\caption{The damages (\%) caused by attacks in four different attacking scenarios on TVShow and PA.}
	\label{table-damage}
	\begin{center}
		\resizebox{5cm}{!}{
			\begin{tabular}{|c|c|c|c|c|}
				
				\hline\hline
				Metrics	& TCA  & RCA  & TSA  & RSA \\
				\hline
				\multicolumn{5}{|c|}{TVShow}\\
				\hline
				CN	        &$+\mathbf{23.524}$  &$+1.400$  & $+3.196$ & $+0.022$ \\
				\hline
				S\o rensen	&$+ \mathbf{11.274}$  &$+0.589$  & $+ 1.424$ & $- 0.002$ \\
				\hline
				RA	        &$+\mathbf{9.461}$  &$+0.101$  & $+3.371$ & $+0.005$ \\
				\hline
				Salton	    &$+\mathbf{22.632}$  &$+0.727$  & $+7.695$ & $-0.025$ \\
				\hline
				
				\multicolumn{5}{|c|}{PA}\\
				\hline
				CN	        &$+\mathbf{7.167}$  &$+1.072$  & $+0.394$ & $+0.138$ \\
				\hline
				S\o rensen	&$+ \mathbf{5.292}$  &$+0.589$  & $-0.302$ & $+ 0.061$ \\
				\hline
				RA	        &$+\mathbf{17.628}$  &$+2.408$  & $+0.952$ & $-0.046$ \\
				\hline
				Salton	    &$+\mathbf{8.911}$  &$+1.659$  & $+1.673$ & $+0.3945$ \\
				\hline
				\hline
				
			\end{tabular}
		}
	\end{center}
\end{table}

In general, the effects of random attacks (whether $V_D$ are clustered
or not) are almost negligible; while the targeted attack on clustered
$V_D$ causes significant damage for all metrics. This motivates us to
focus on TCA, where $V_D$ consists of relatively high degree nodes and
$V_A$ are sampled from $V_D$.  We note that TCA models are an important class of attacks in reality, where both the defender and the attacker are interested in important individuals (measured by their degrees) in the networks. While the defender wants to predict the mutual connections among them, an attacker aims to hide a particular relationship. 

The rest of this section focuses on evaluating the performance of our
proposed defense strategies in the TCA scenario. In our experiments,
we set $|V_D| = 10$ and consider all node pairs in $V_D$. Accordingly,
the set of critical edges $|E_c|$ has a size of $\sim 5000$. We sample
$V_D$ from high degree nodes such that on average there are 
$\sim 45\%-55\%$ edges in $H_D$ in the sample graphs. For the \textit{IDOpt} and \textit{IDRank} defense, we generate $4000$ samples to identify the important links to protect.

\subsection{Defense Performance under Attacks}

\paragraph{Defense under \textit{LinkDel} attack}
We present the $\mathsf{DPR}$ of the three defense methods, \textit{IDOpt}, \textit{IDRank}, and \textit{PPN}, under the \textit{LinkDel} attack on four datasets in Fig.~\ref{fig-PA} to Fig.~\ref{fig-Gov}.
In all the figures, the horizontal line $\mathsf{DPR} = 0$ represents attacking without defense and $\mathsf{DPR} = 1$ means the defender's loss drops to $L_0$ for which there is no attack.

\begin{figure}[htp!]
	\centering
	\begin{subfigure}[t]{0.235\textwidth}
		\centering
		\includegraphics[width=\textwidth,height = 2.5cm]{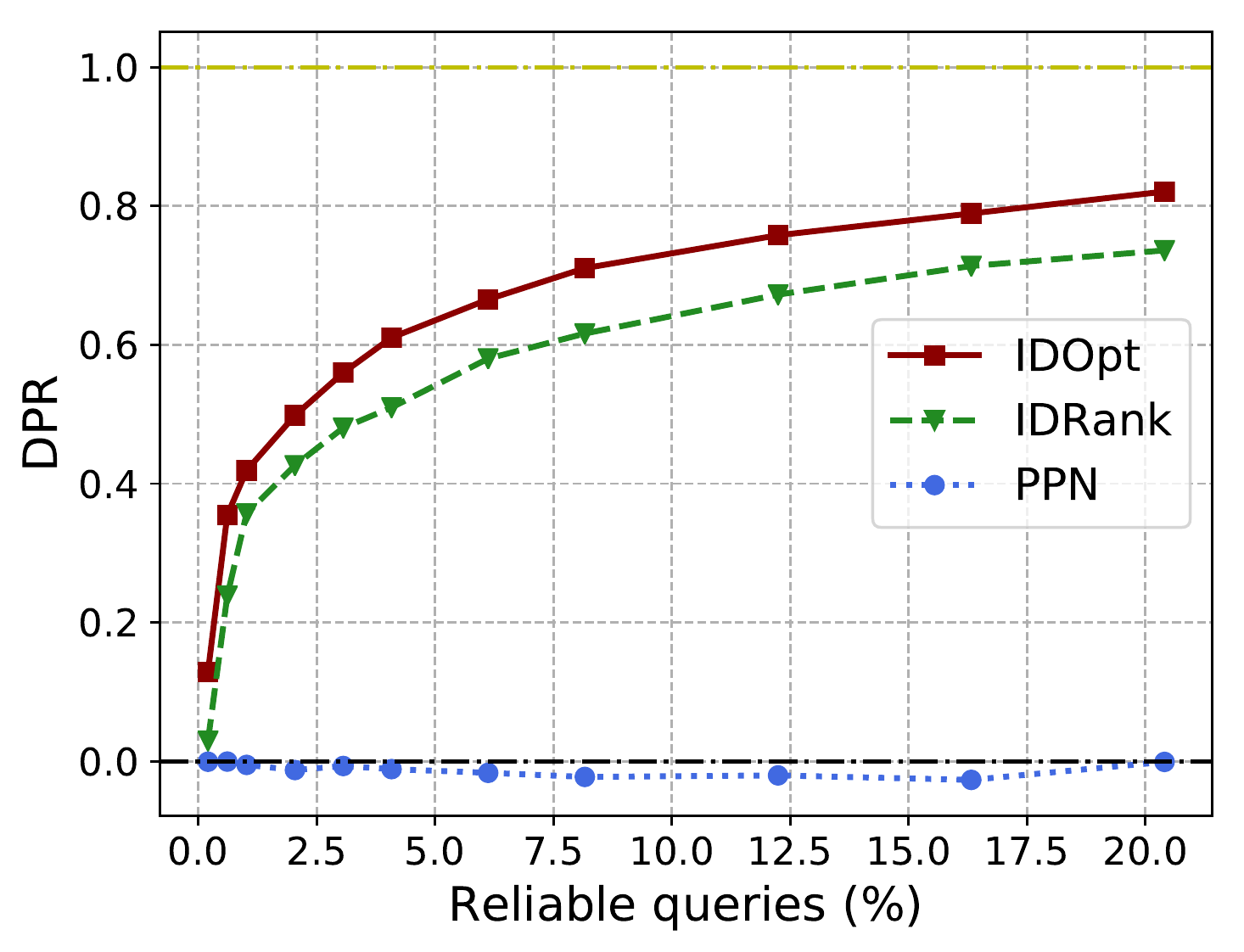}
		\caption{CN}
	\end{subfigure}%
	\hfill
	\begin{subfigure}[t]{0.235\textwidth}
		\centering
		\includegraphics[width=\textwidth,height = 2.5cm]{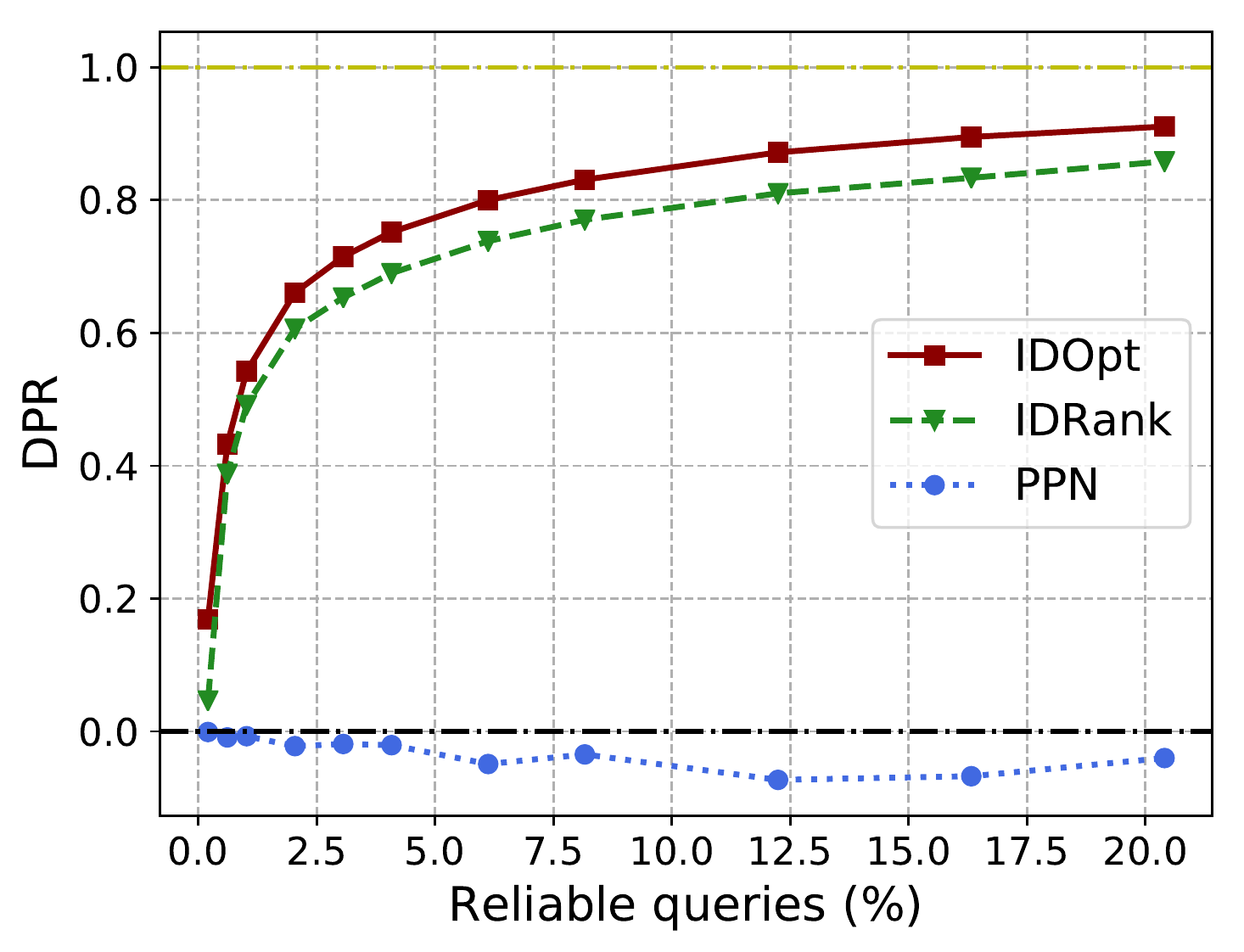}
		\caption{S\o rensen}
	\end{subfigure}	
	
	\begin{subfigure}[t]{0.235\textwidth}
		\centering
		\includegraphics[width=\textwidth,height = 2.5cm]{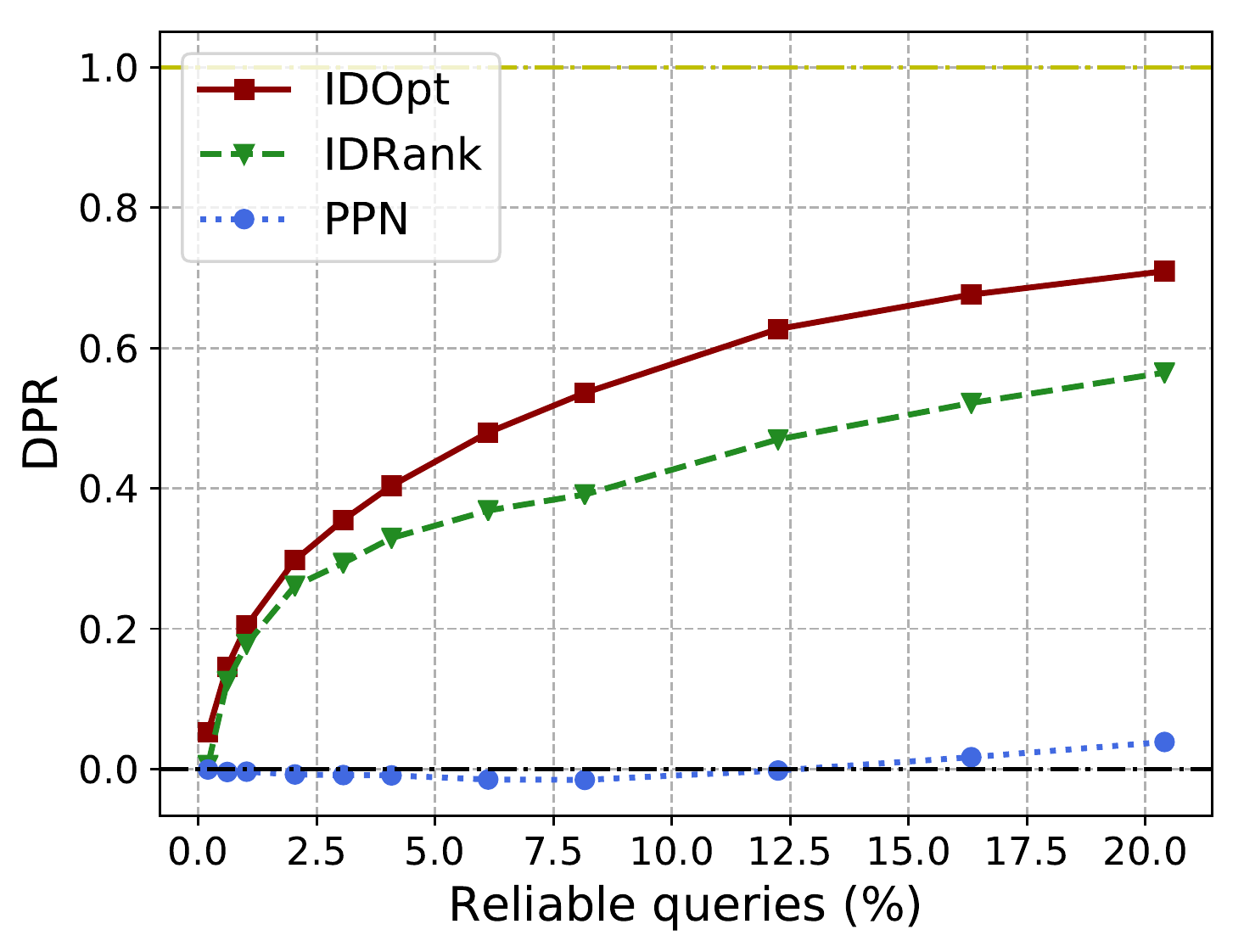}
		\caption{RA}
	\end{subfigure}
	\hfill
	\begin{subfigure}[t]{0.235\textwidth}
		\centering
		\includegraphics[width=\textwidth,height = 2.5cm]{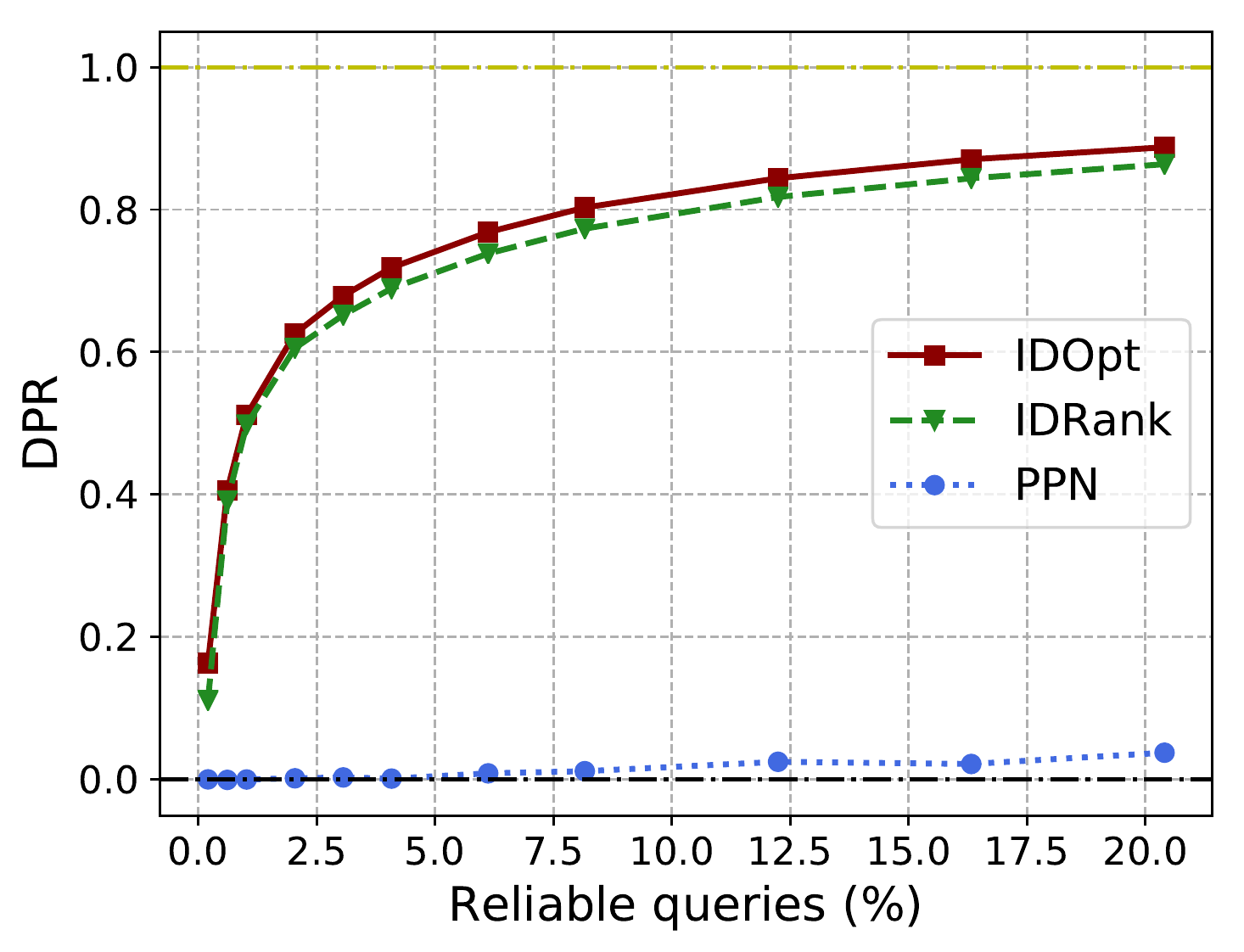}
		\caption{Salton}
	\end{subfigure}	
	\caption{$\mathsf{DPR}$ under \textit{LinkDel} attack on PA.}
	\label{fig-PA}
\end{figure}

\begin{figure}[htp!]
	\centering
	
	\begin{subfigure}[t]{0.235\textwidth}
		\centering
		\includegraphics[width=\textwidth,height = 2.5cm]{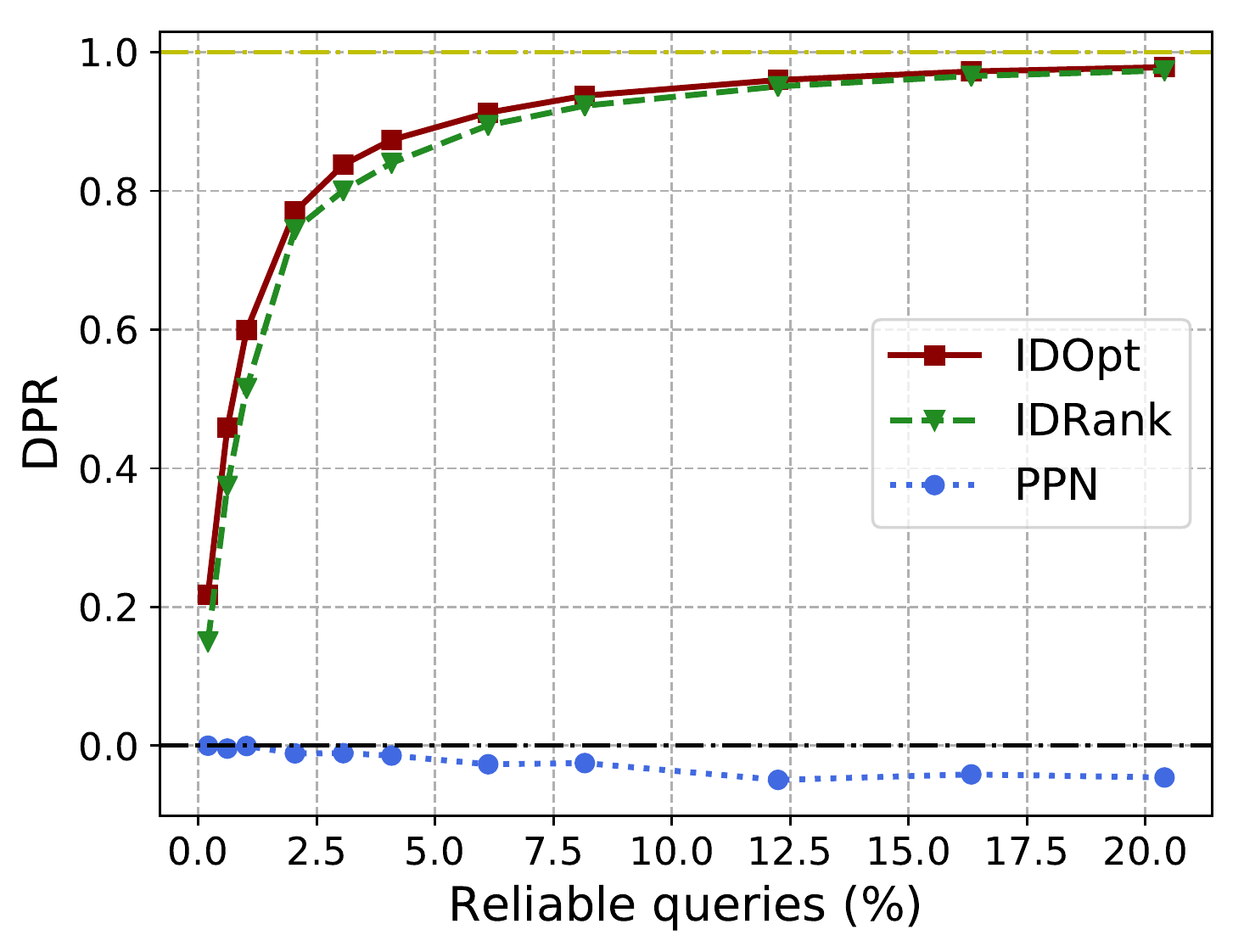}
		\caption{CN}
	\end{subfigure}%
	\hfill
	\begin{subfigure}[t]{0.235\textwidth}
		\centering
		\includegraphics[width=\textwidth,height = 2.5cm]{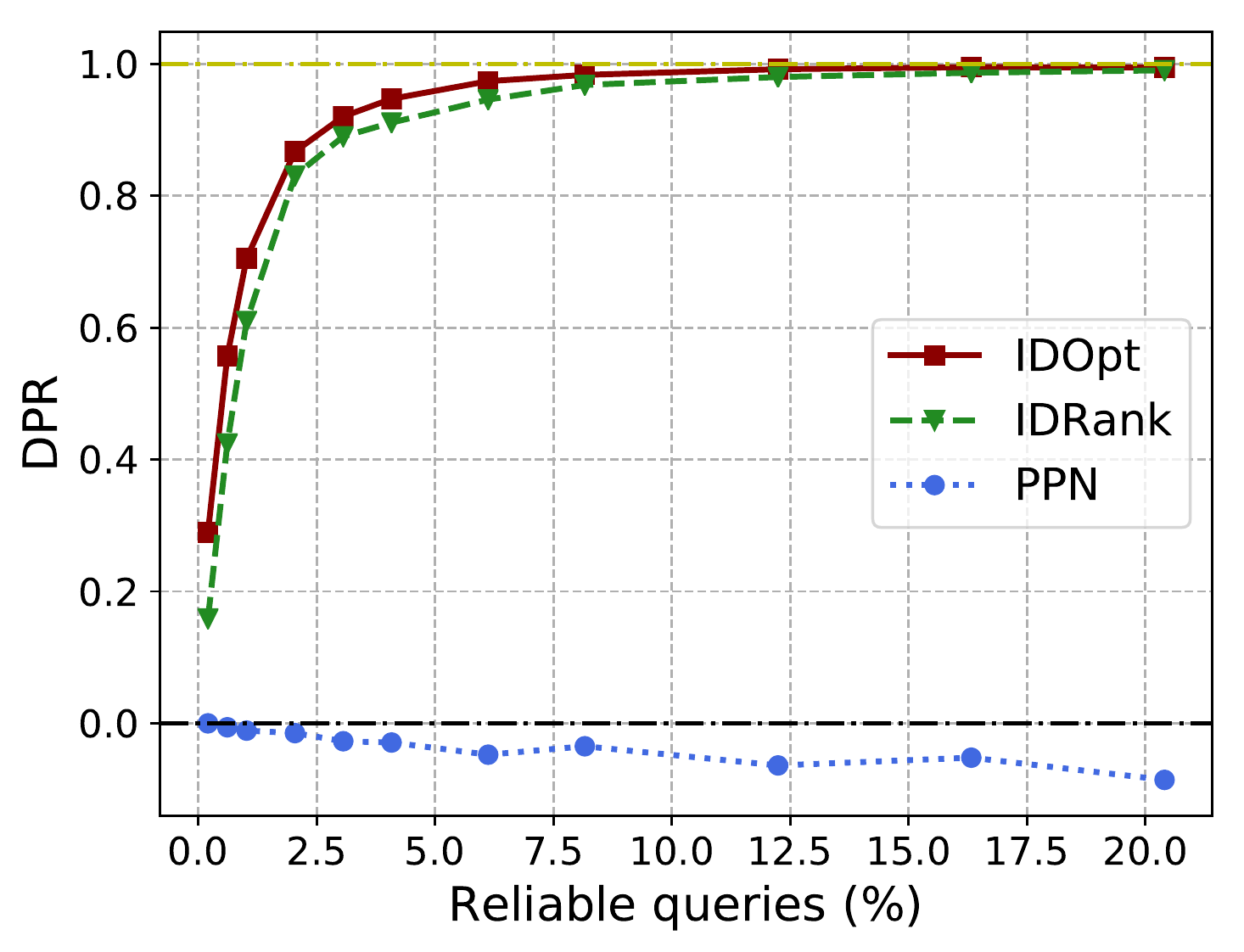}
		\caption{S\o rensen}
	\end{subfigure}	
	
	\begin{subfigure}[t]{0.235\textwidth}
		\centering
		\includegraphics[width=\textwidth,height = 2.5cm]{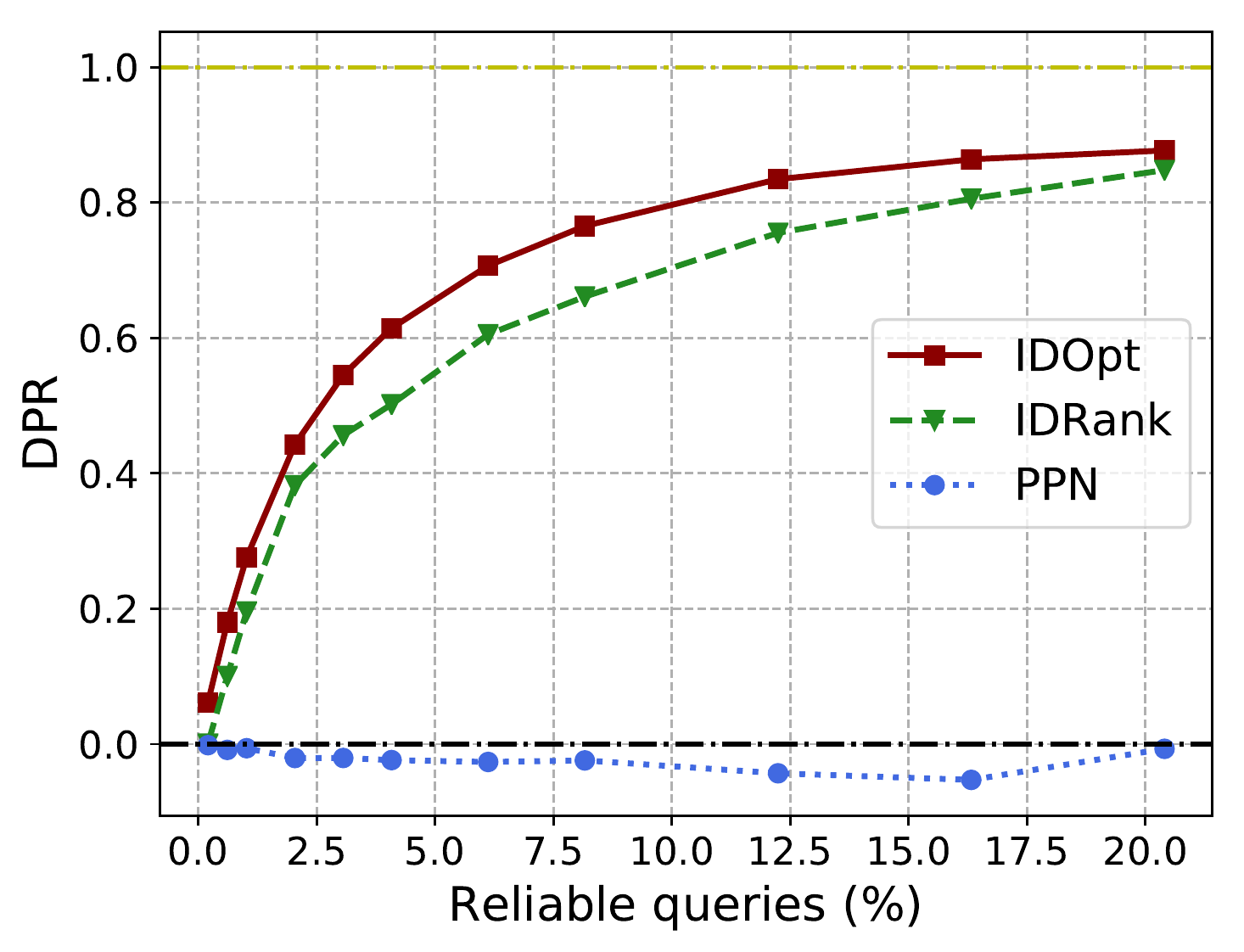}
		\caption{RA}
	\end{subfigure}
	\hfill
	\begin{subfigure}[t]{0.235\textwidth}
		\centering
		\includegraphics[width=\textwidth,height = 2.5cm]{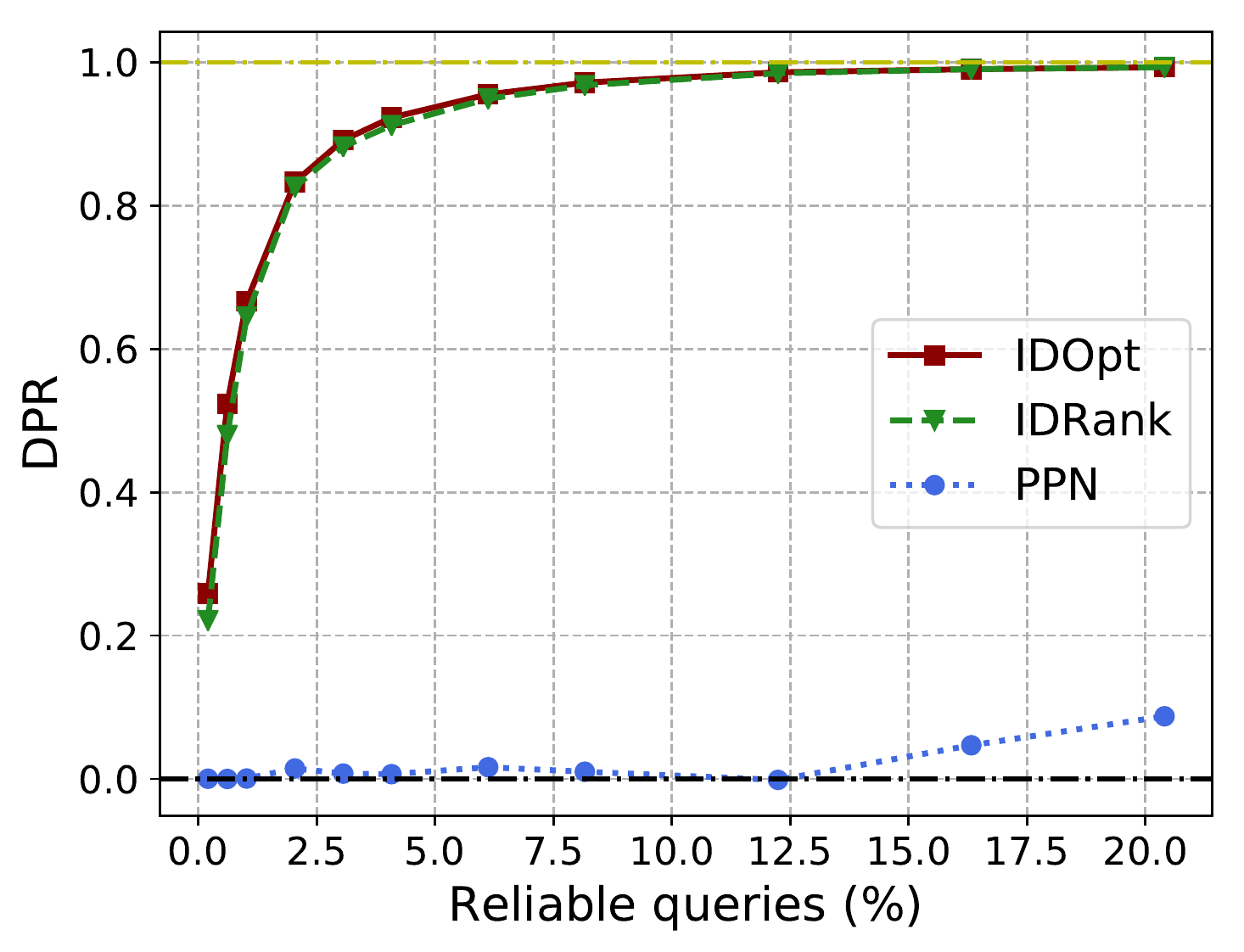}
		\caption{Salton}
	\end{subfigure}	
	\caption{$\mathsf{DPR}$ under \textit{LinkDel} attack on PLD.}
	\label{fig-PLD}
\end{figure}

\begin{figure}[htp!]
	\centering
	
	\begin{subfigure}[t]{0.235\textwidth}
		\centering
		\includegraphics[width=\textwidth,height = 2.5cm]{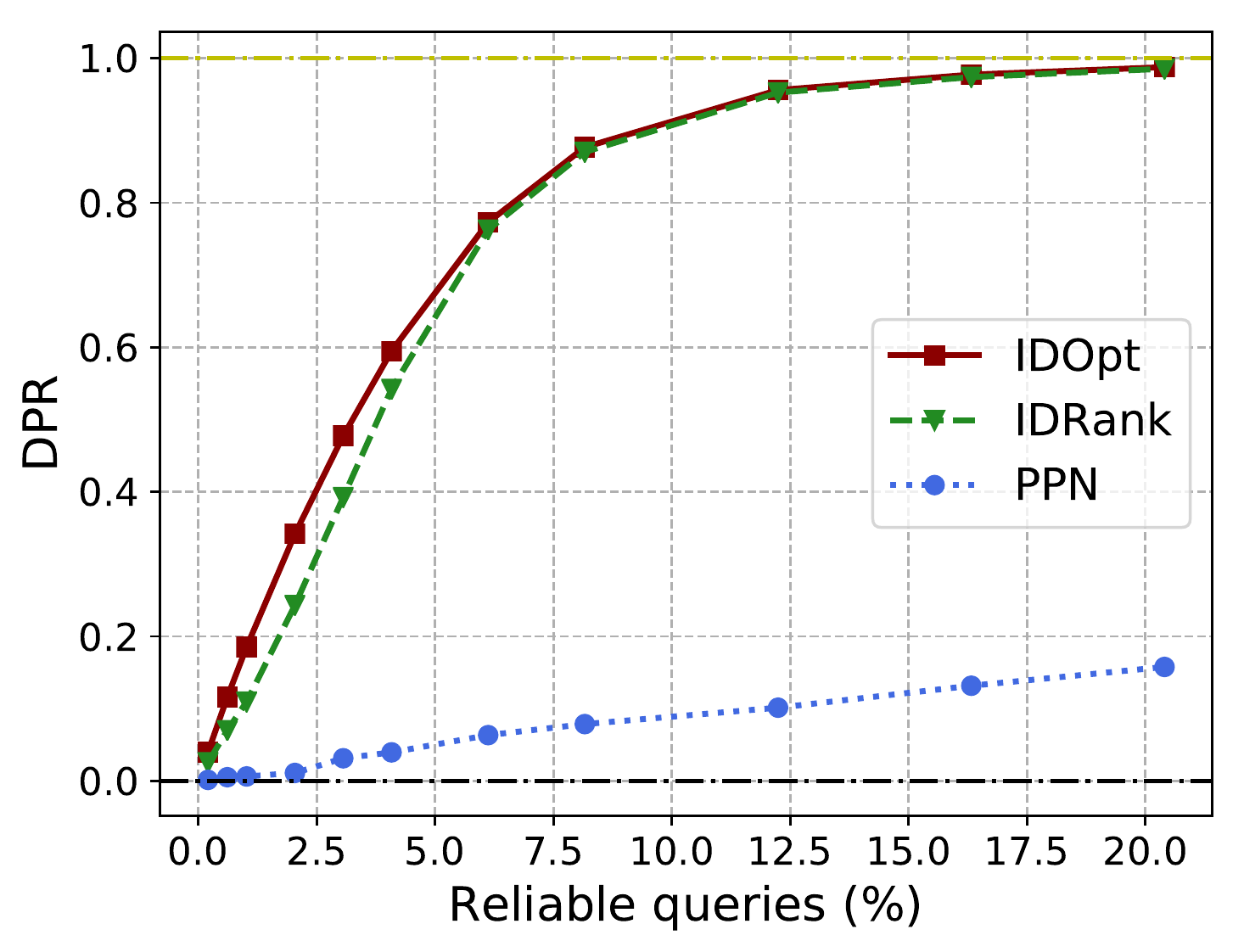}
		\caption{CN}
	\end{subfigure}%
	\hfill
	\begin{subfigure}[t]{0.235\textwidth}
		\centering
		\includegraphics[width=\textwidth,height = 2.5cm]{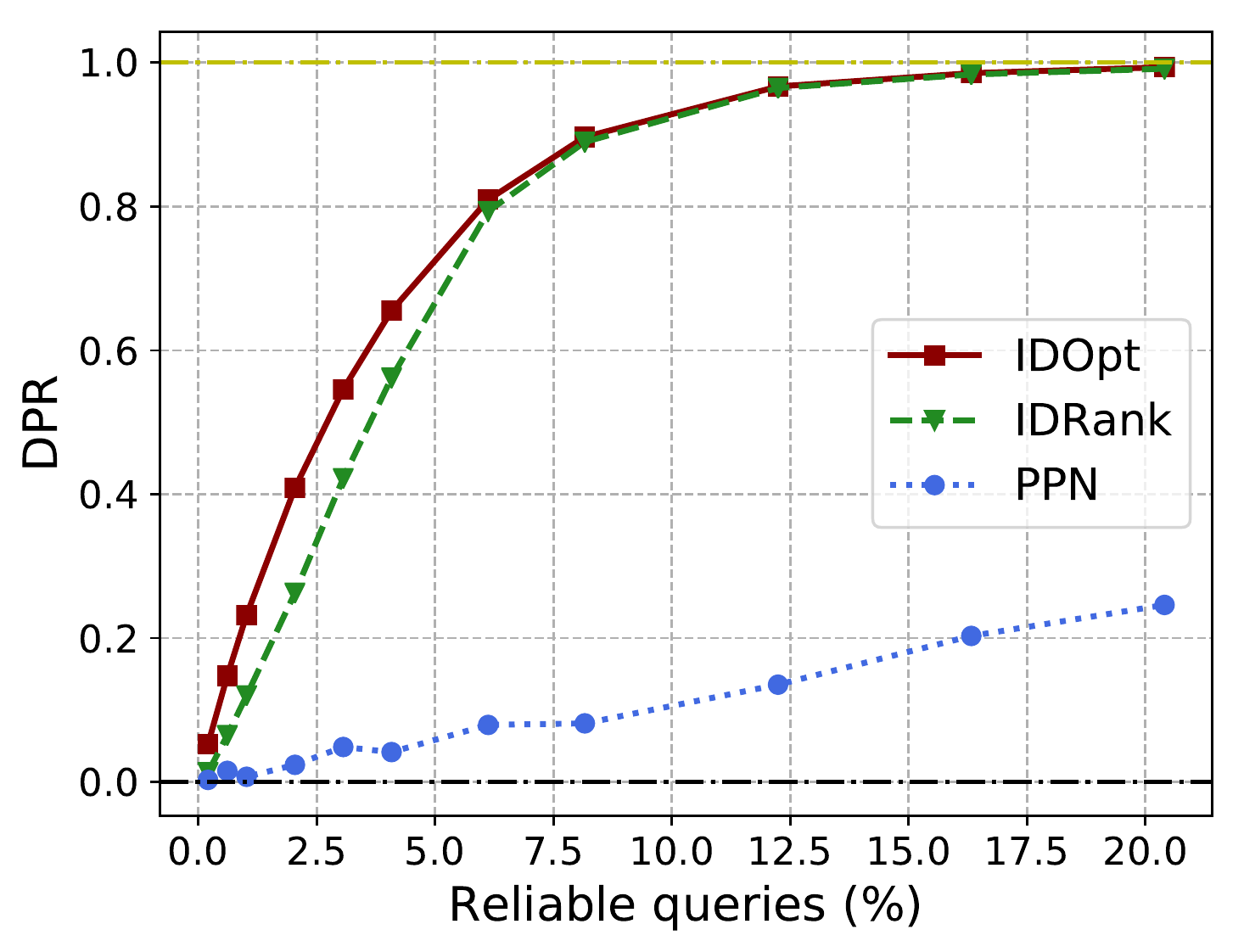}
		\caption{S\o rensen}
	\end{subfigure}	
	
	\begin{subfigure}[t]{0.235\textwidth}
		\centering
		\includegraphics[width=\textwidth,height = 2.5cm]{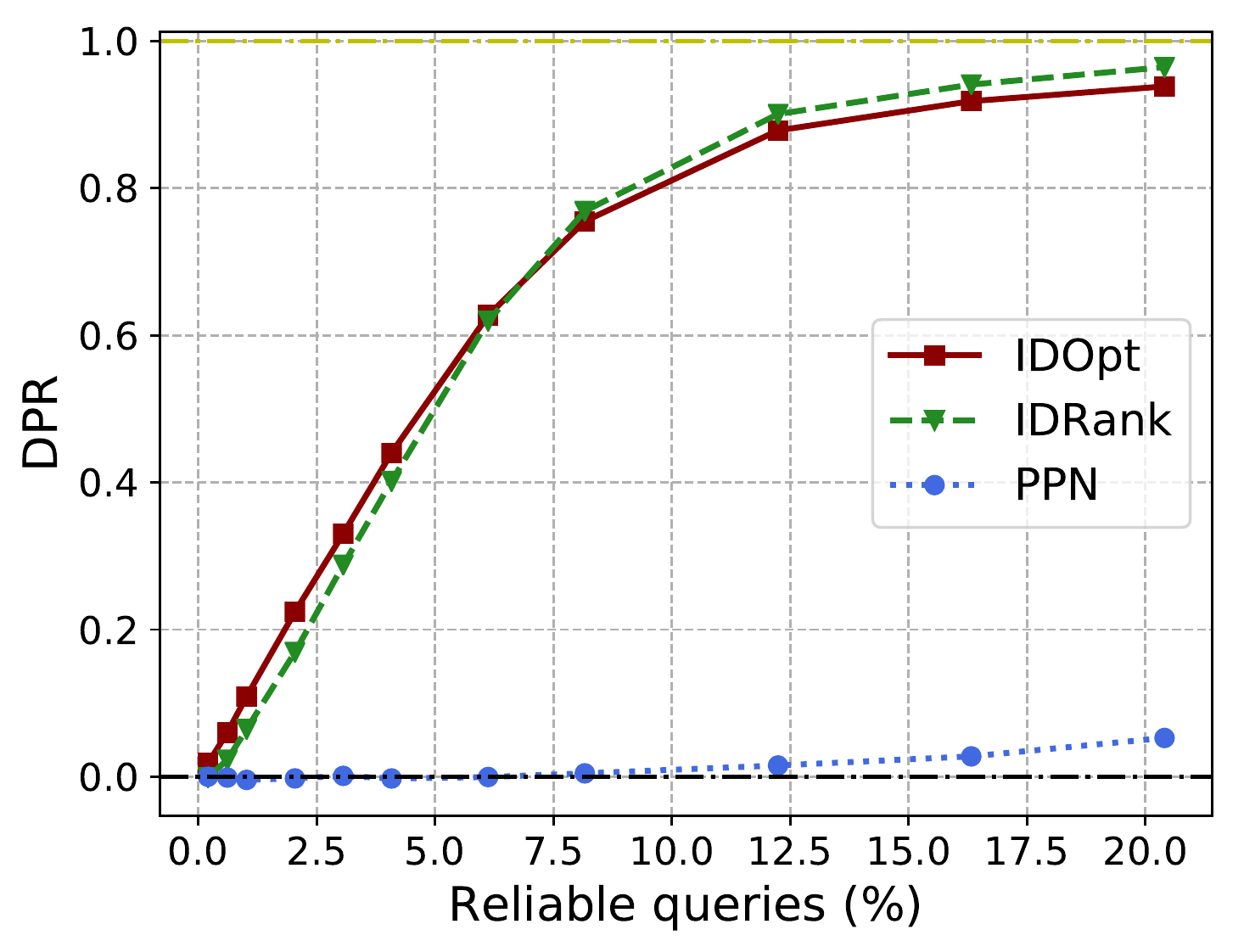}
		\caption{RA}
	\end{subfigure}
	\hfill
	\begin{subfigure}[t]{0.235\textwidth}
		\centering
		\includegraphics[width=\textwidth,height = 2.5cm]{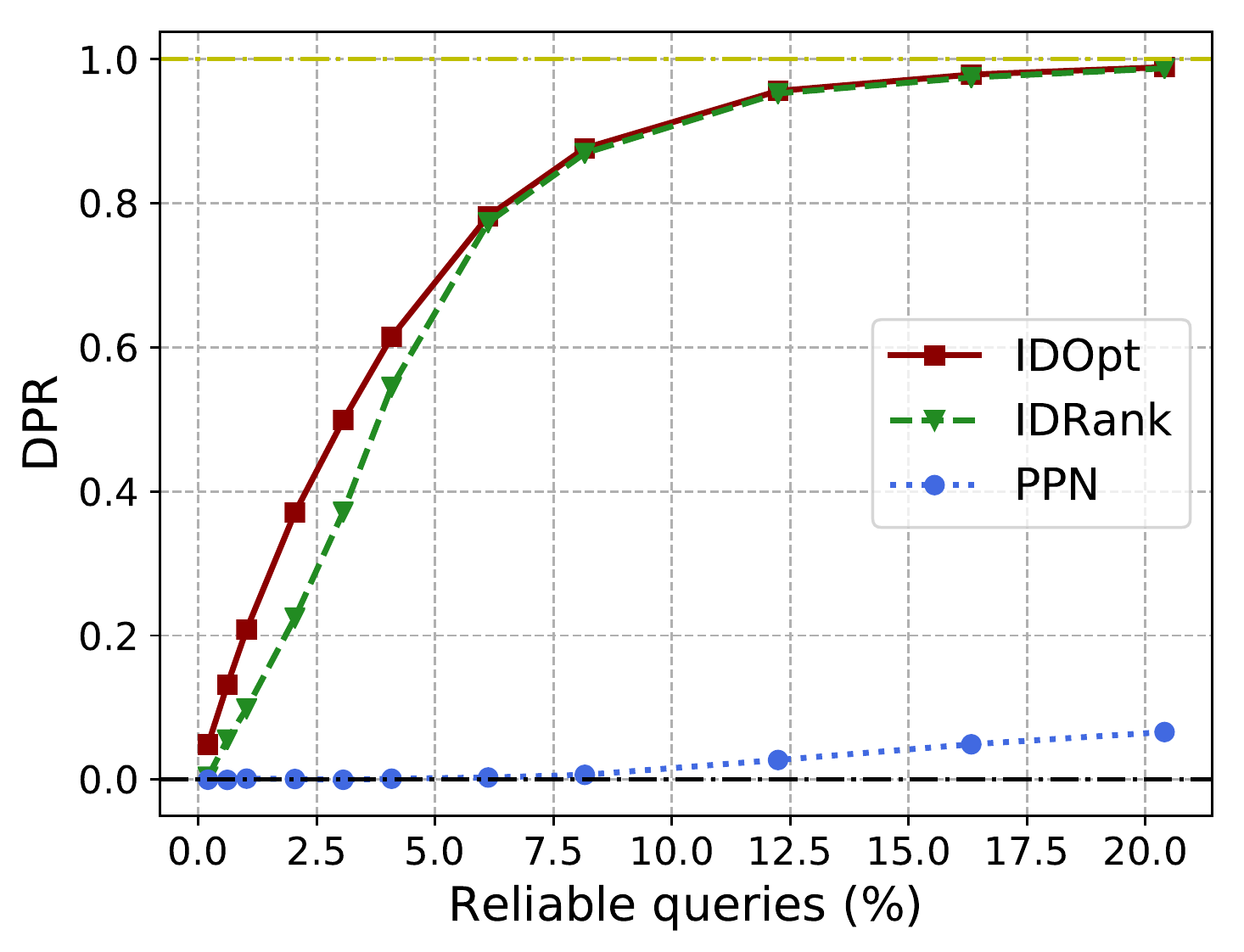}
		\caption{Salton}
	\end{subfigure}	
	\caption{$\mathsf{DPR}$ under \textit{LinkDel} attack on TVShow.}
	\label{fig-TVShow}
\end{figure}

\begin{figure}[htp!]
	\centering
	\begin{subfigure}[t]{0.235\textwidth}
		\centering
		\includegraphics[width=\textwidth,height = 2.5cm]{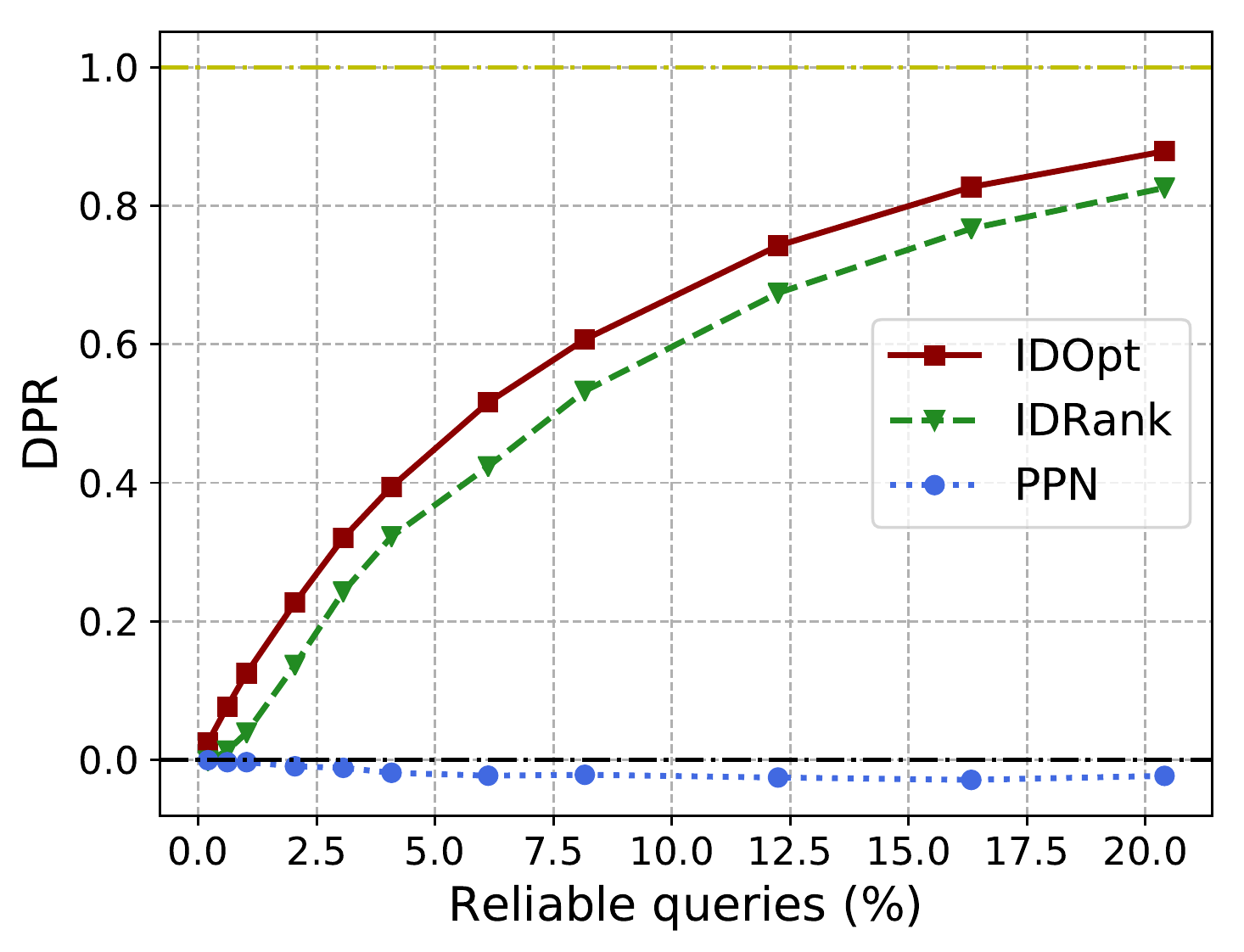}
		\caption{CN}
	\end{subfigure}
	\hfill
	\begin{subfigure}[t]{0.235\textwidth}
		\centering
		\includegraphics[width=\textwidth,height = 2.5cm]{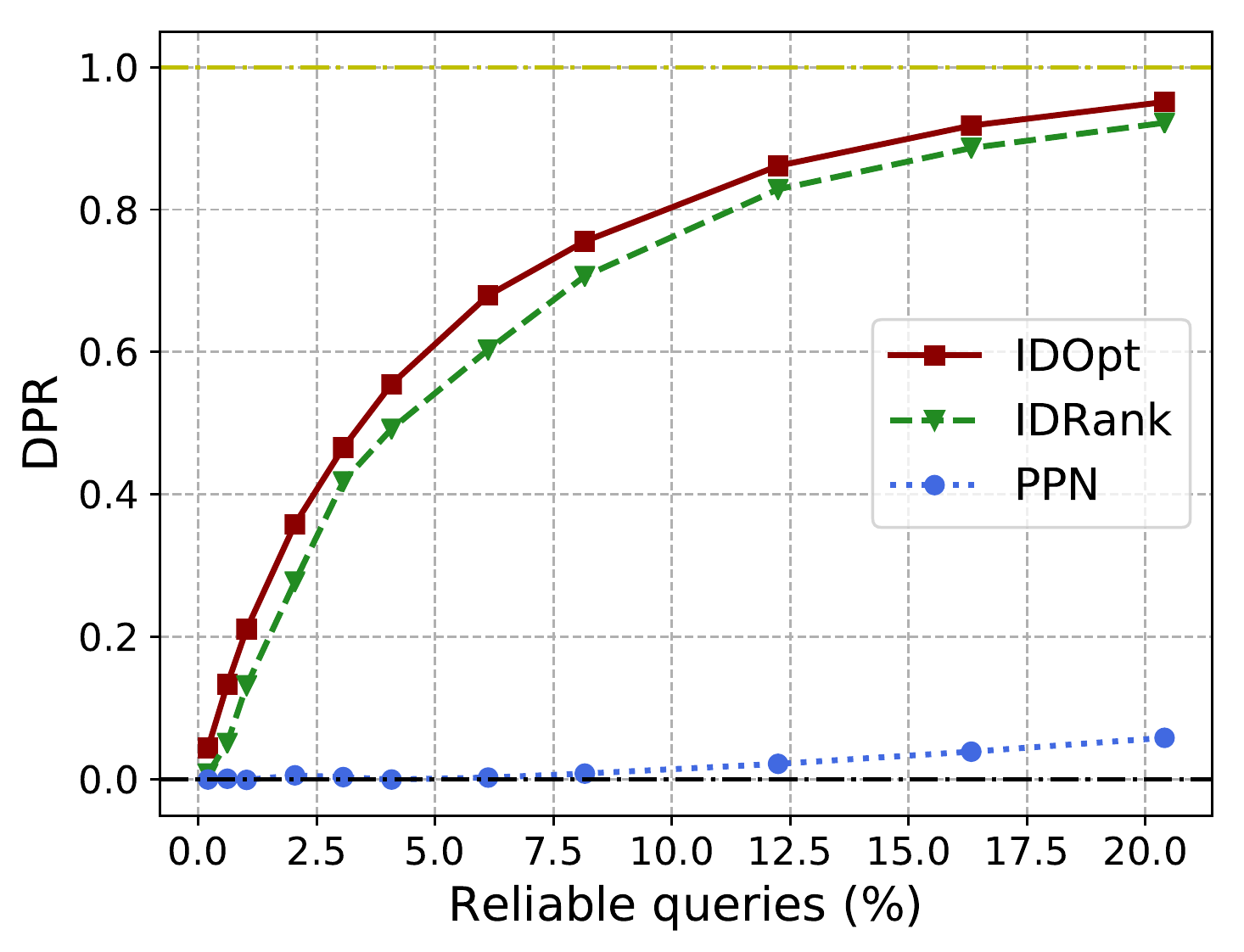}
		\caption{S\o rensen}
	\end{subfigure}	
	
	\begin{subfigure}[t]{0.235\textwidth}
		\centering
		\includegraphics[width=\textwidth,height = 2.5cm]{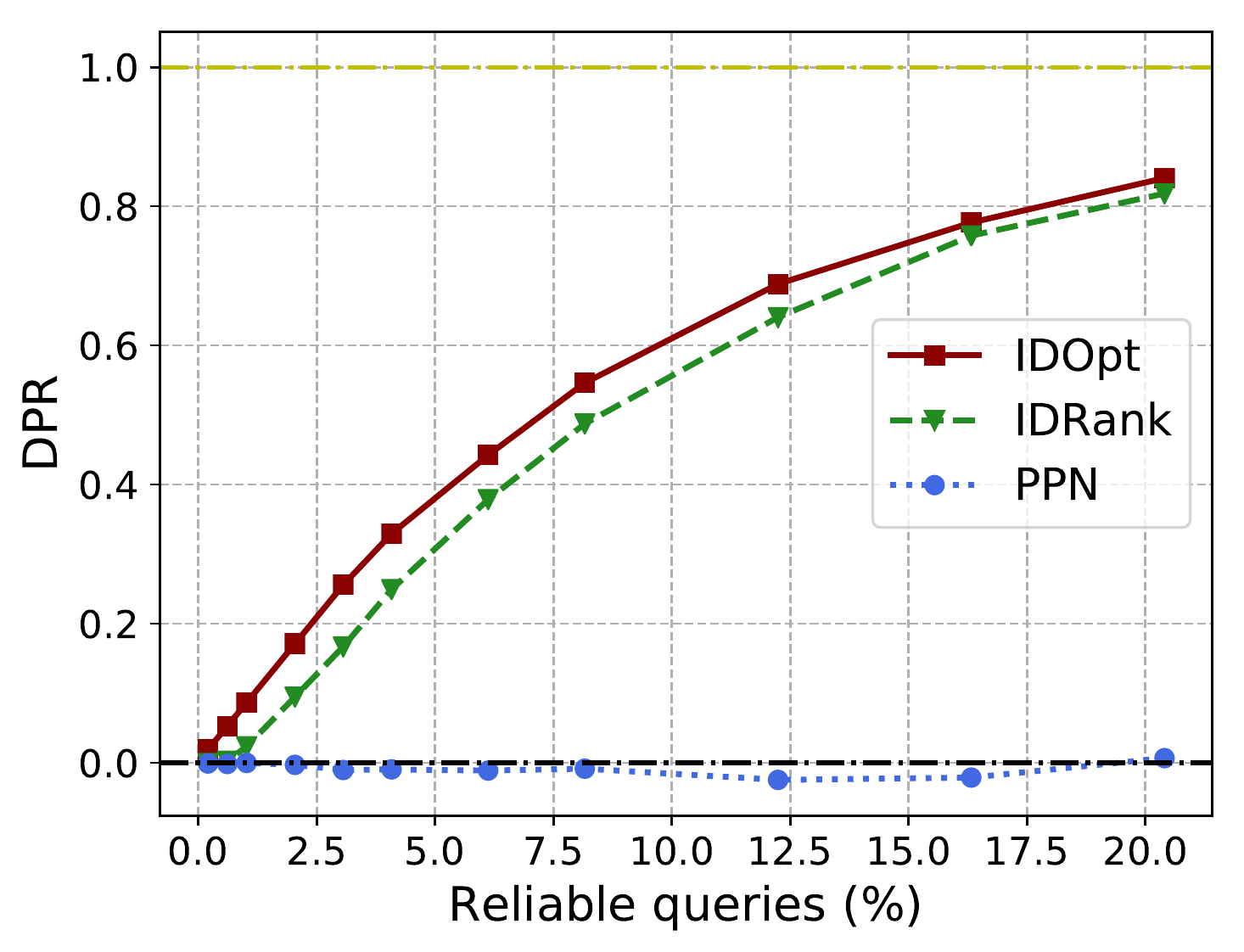}
		\caption{RA}
	\end{subfigure}
	\hfill
	\begin{subfigure}[t]{0.235\textwidth}
		\centering
		\includegraphics[width=\textwidth,height = 2.5cm]{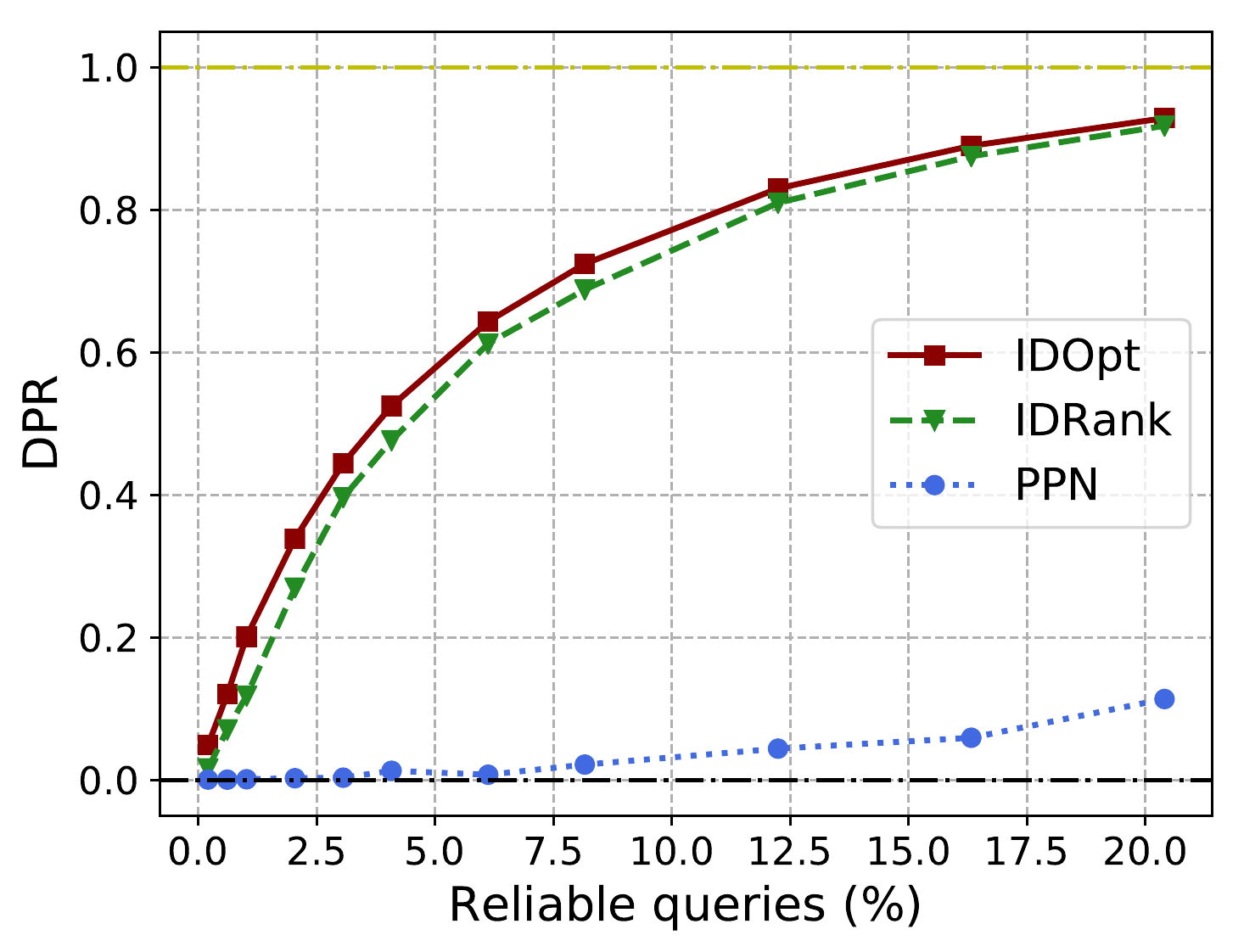}
		\caption{Salton}
	\end{subfigure}	
	\caption{ $\mathsf{DPR}$ under \textit{LinkDel} attack on Gov.}
	\label{fig-Gov}
\end{figure}

In general, by making a small portion of queries reliable, our
proposed \textit{IDOpt} and \textit{IDRank} can alleviate much of the damage caused by
attacks, with \textit{IDRank} nearly as good as \textit{IDOpt} in most
cases. For example, for the CN metric on PA dataset, making $100$ reliable
queries ($\sim 2\%$) can prevent almost $60\%$ of the damage and $500$
reliable queries ($\sim 10\%$) can prevent around $80\%$ of the
damage if we use our proposed approach, whereas \textit{PPN} has
virtually no effect even with $20\%$ reliable queries.
Throughout, we observe diminishing returns to investment in reliable
queries, an observation that is most evident on the PLD dataset.

\paragraph{Defense under \textit{UnbiasDel} and \textit{RandDel} attacks}
We further test the defense performance under the \textit{UnbiasDel}
and \textit{RandDel} attacks, respectively, under the same experiment
settings. 
The $\mathsf{DPR}$ on the PA and TVShow datasets are presented in
Fig.~\ref{fig-PA-both} and Fig.~\ref{fig-TVShow-both},
respectively. The results show that \textit{IDOpt} and \textit{IDRank}
can successfully limit the damage from these two additional attacks,
even though they were not explicitly designed with these in mind.

\begin{figure}[htp!]
	\centering
	\begin{subfigure}[t]{0.235\textwidth}
		\centering
		\includegraphics[width=\textwidth,height = 2.5cm]{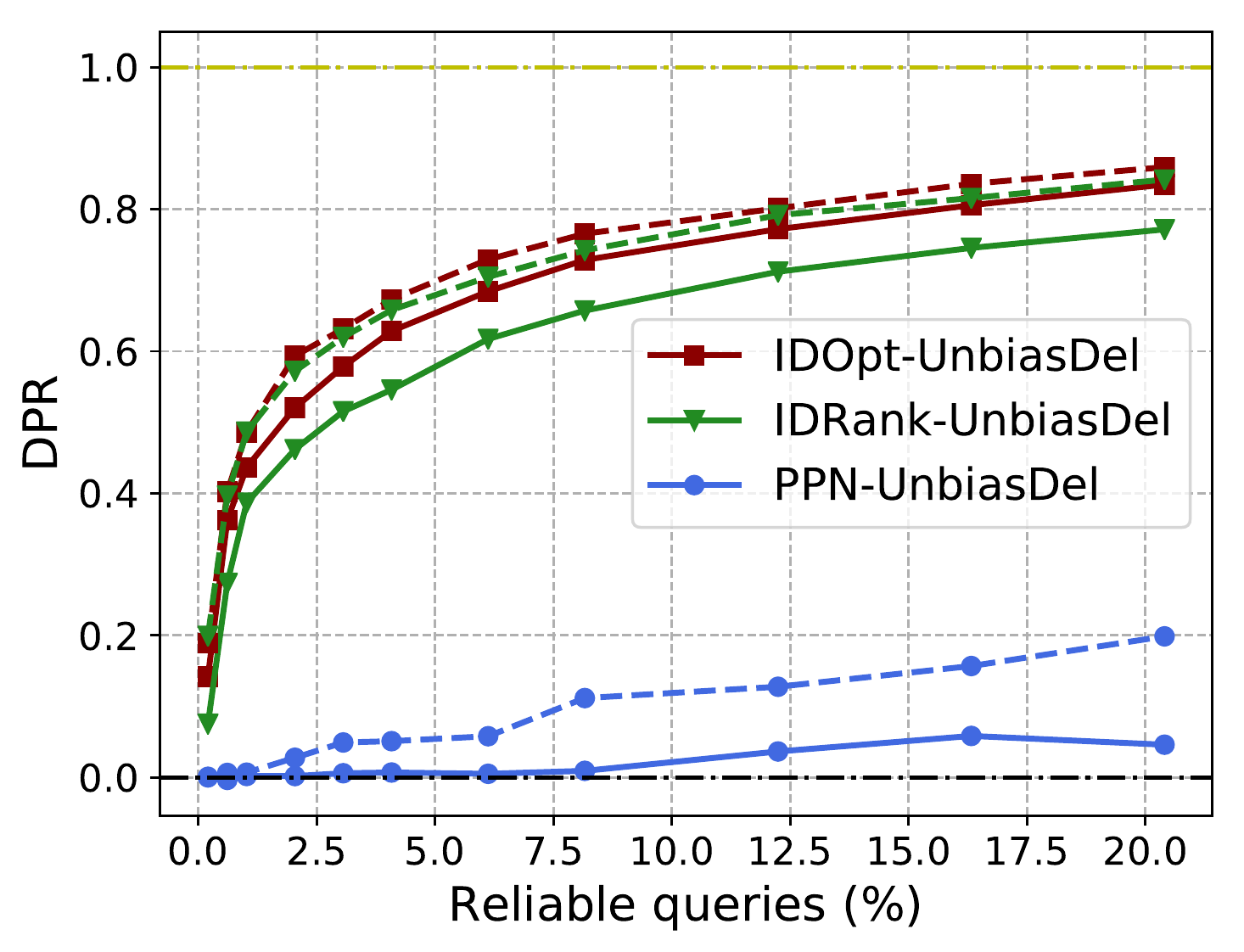}
		\caption{CN}
	\end{subfigure}%
	\hfill
	\begin{subfigure}[t]{0.235\textwidth}
		\centering
		\includegraphics[width=\textwidth,height = 2.5cm]{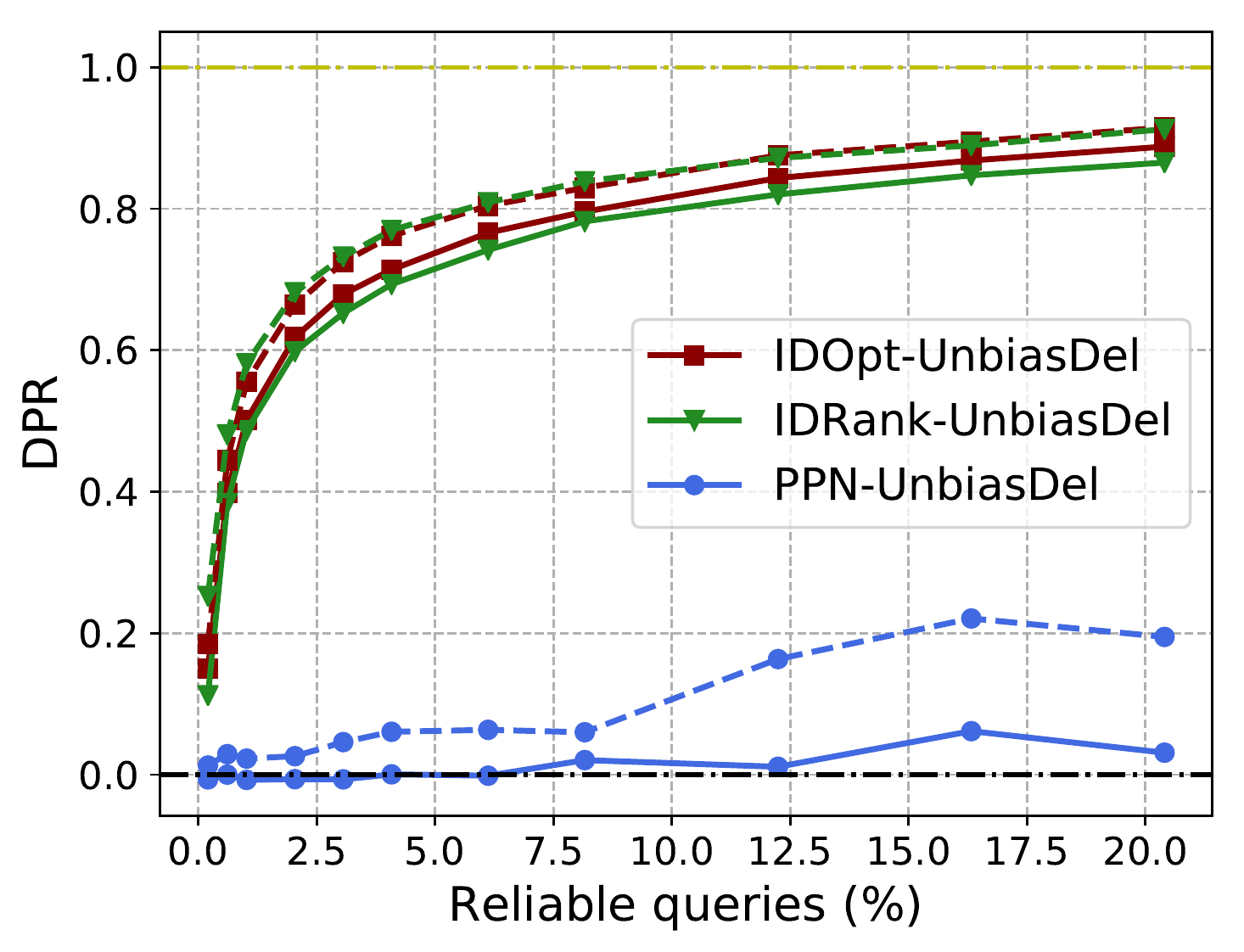}
		\caption{S\o rensen}
	\end{subfigure}	
	
	\begin{subfigure}[t]{0.235\textwidth}
		\centering
		\includegraphics[width=\textwidth,height = 2.5cm]{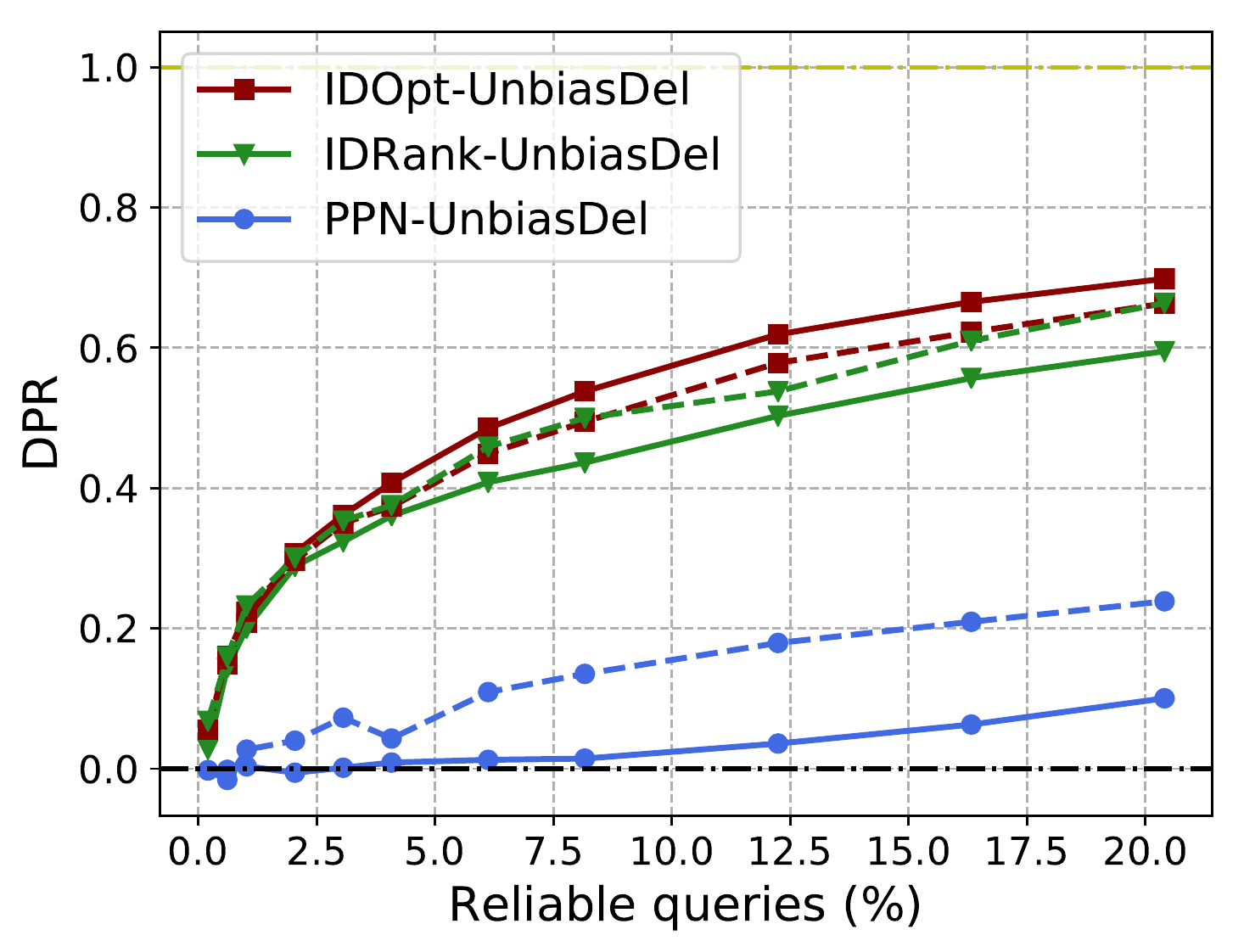}
		\caption{RA}
	\end{subfigure}
	\hfill
	\begin{subfigure}[t]{0.235\textwidth}
		\centering
		\includegraphics[width=\textwidth,height = 2.5cm]{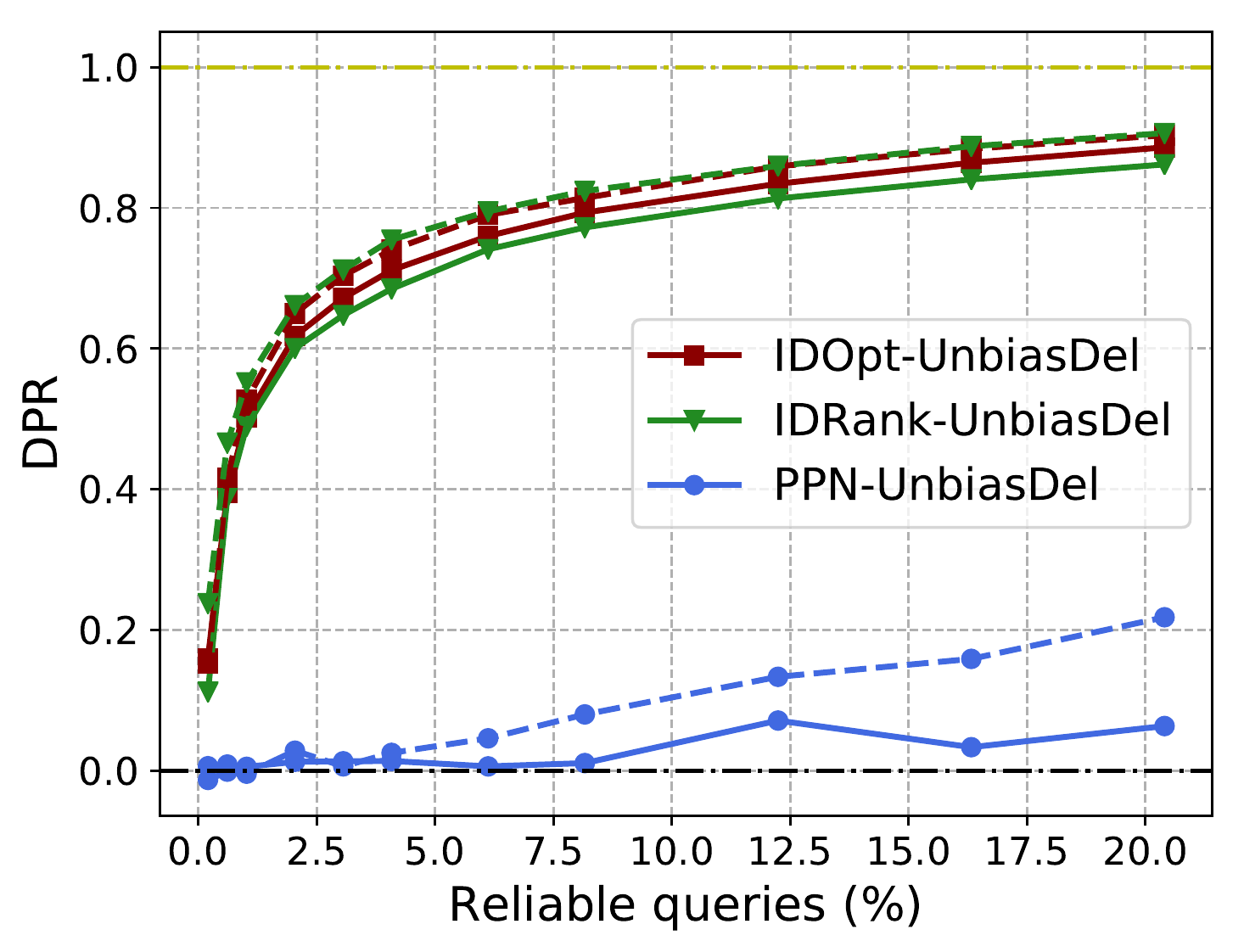}
		\caption{Salton}
	\end{subfigure}	
	\caption{$\mathsf{DPR}$  under \textit{UnbiasDel} attack (solid lines) and \textit{RandDel} attack (dotted lines with the corresponding color) on PA.}
	\label{fig-PA-both}
\end{figure}

\begin{figure}[htp!]
	\centering
	\begin{subfigure}[t]{0.235\textwidth}
		\centering
		\includegraphics[width=\textwidth,height = 2.5cm]{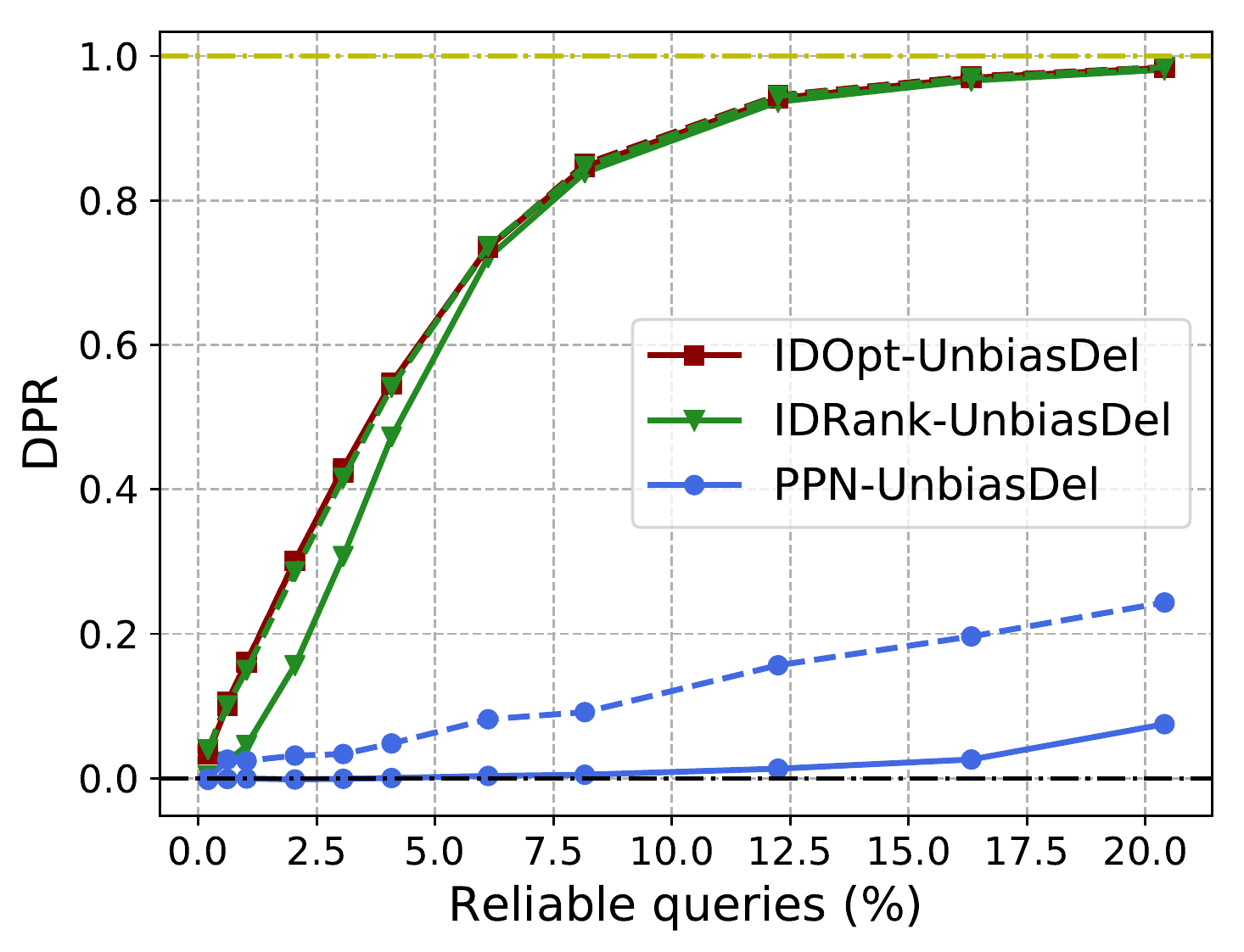}
		\caption{CN}
	\end{subfigure}%
	\hfill
	\begin{subfigure}[t]{0.235\textwidth}
		\centering
		\includegraphics[width=\textwidth,height = 2.5cm]{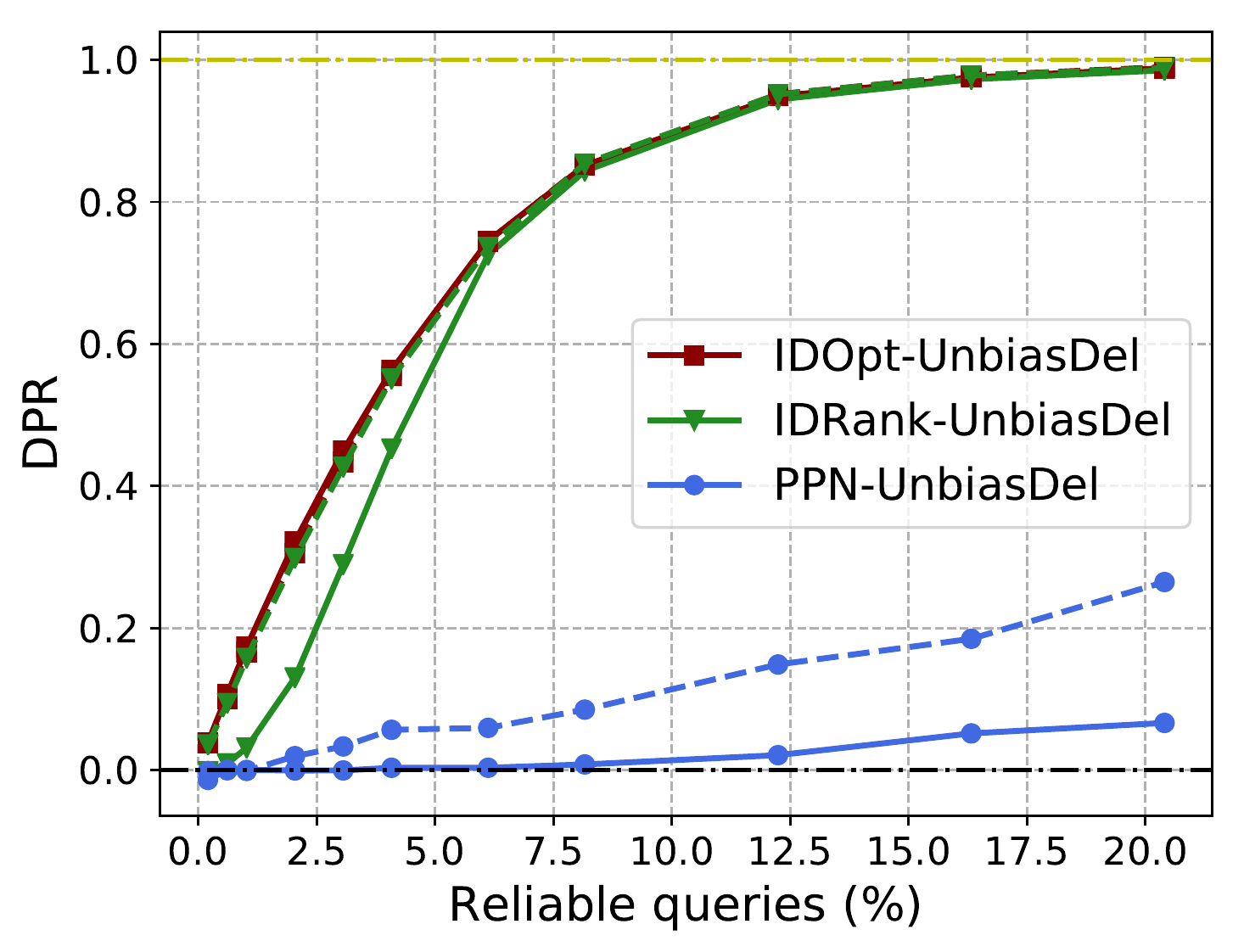}
		\caption{S\o rensen}
	\end{subfigure}	
	
	\begin{subfigure}[t]{0.235\textwidth}
		\centering
		\includegraphics[width=\textwidth,height = 2.5cm]{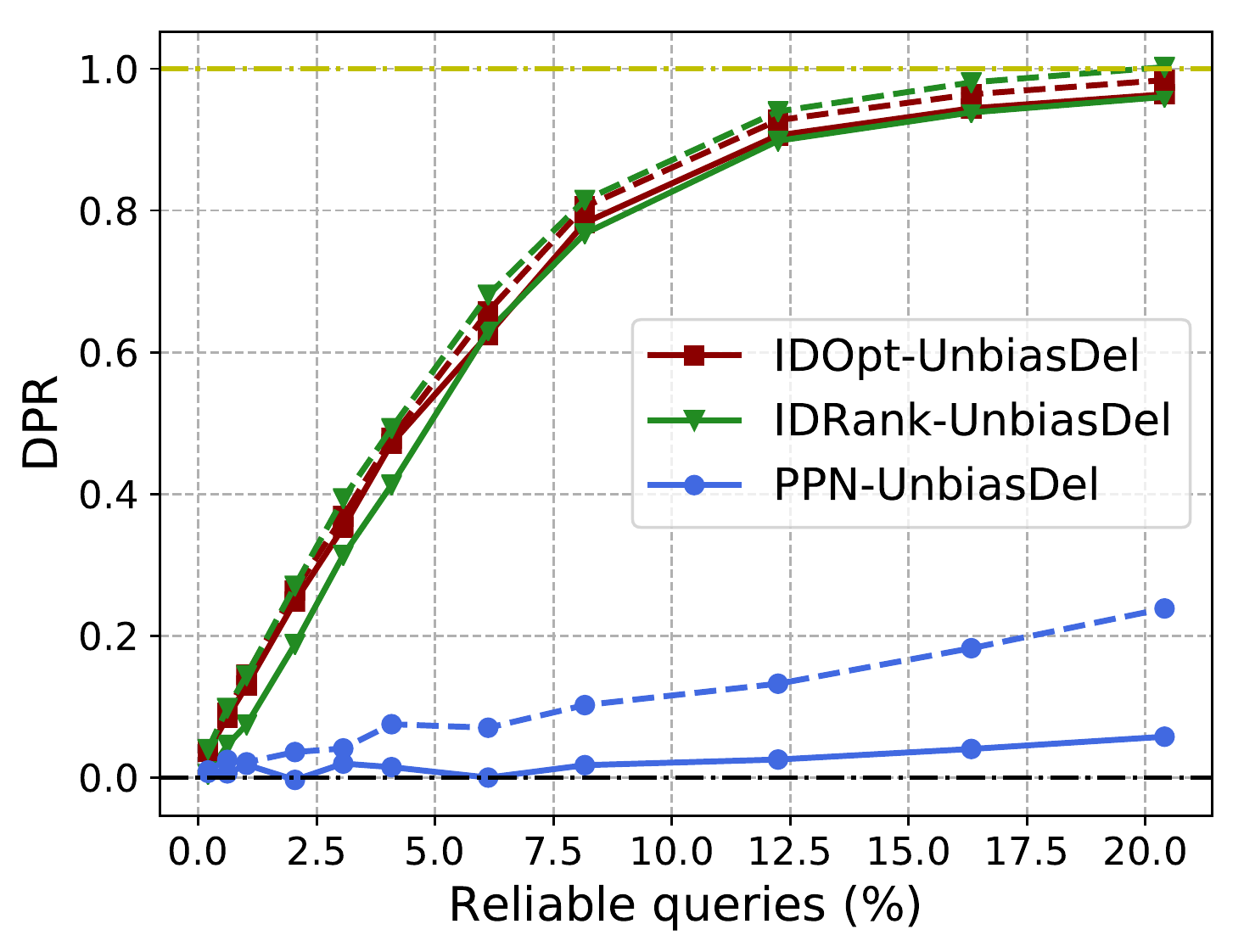}
		\caption{RA}
	\end{subfigure}
	\hfill
	\begin{subfigure}[t]{0.235\textwidth}
		\centering
		\includegraphics[width=\textwidth,height = 2.5cm]{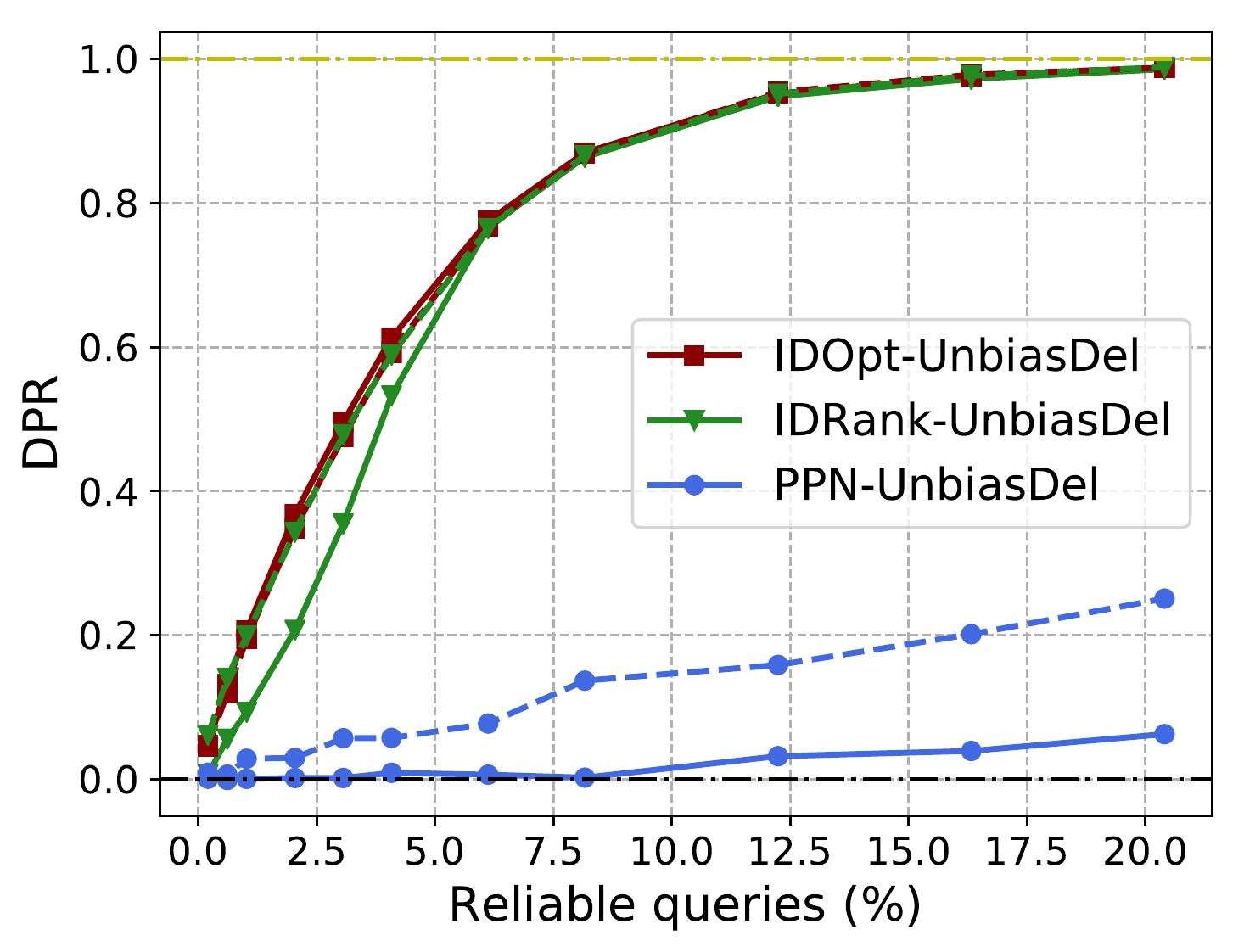}
		\caption{Salton}
	\end{subfigure}	
	\caption{$\mathsf{DPR}$  under \textit{UnbiasDel} attack (solid lines) and \textit{RandDel} attack (dotted lines with the corresponding color) on TVShow.}
	\label{fig-TVShow-both}
\end{figure}

\section{Conclusion}
In this paper, we initiate the study of making link prediction robust
against attacks that remove edges from the observed network in order
to hide a target link.
Specifically, we endow the analyst with the ability to make a small
set of reliable queries that return accurate results about associated
edges and cannot be manipulated by the adversary.
We then model the problem of robust link prediction in this context as
a Bayesian Stackelberg game in which the defender chooses which
reliable queries to make, and the attacker then deletes a subset of
links.
In this game, the analyst (defender) is uncertain about both the true
graph structure and the attacker's preference about which link to
hide.
We show that solving this game is NP-Hard, but also present two
approaches for approximately solving it. Our extensive experimental evaluation demonstrates that our robust link prediction approach is quite effective in defending against several attacks.

\section*{Acknowledgment}
This work was partially supported by the NSF (IIS-1905558) and ARO (W911NF1610069, MURI W911NF1810208). Tomasz P. Michalak was supported by the Polish National Science Centre grant 2016/23/B/ST6/03599.

\bibliographystyle{plain}
\bibliography{citation}

\end{document}